\newcommand{\mb}{\mathbf}
\newtheorem{lemma}{Lemma}
\newtheorem{thm}{Theorem}
\newcommand\ci{\perp\!\!\!\perp}
\begin{document}

%

%

\twocolumn[

\aistatstitle{On the Generalization Error Bounds of Neural Networks under Diversity-Inducing Mutual Angular Regularization}

\aistatsauthor{ Pengtao Xie \And Yuntian Deng \And Eric Xing }

\aistatsaddress{ Carnegie Mellon University \And Carnegie Mellon University \And Carnegie Mellon University } 
]

\begin{abstract}

Recently diversity-inducing regularization methods for latent variable models (LVMs), which encourage the components in LVMs to be diverse, have been studied to address several issues involved in latent variable modeling: (1) how to capture long-tail patterns underlying data; (2) how to reduce model complexity without sacrificing expressivity; (3) how to improve the interpretability of learned patterns. While the effectiveness of diversity-inducing regularizers such as the mutual angular regularizer \cite{xie2015diversifying} has been demonstrated empirically, a rigorous theoretical analysis of them is still missing. In this paper, we aim to bridge this gap and analyze how the mutual angular regularizer (MAR) affects the generalization performance of supervised LVMs. We use neural network (NN) as a model instance to carry out the study and the analysis shows that increasing the diversity of hidden units in NN would reduce estimation error and increase approximation error. In addition to theoretical analysis, we also present empirical study which demonstrates that the MAR can greatly improve the performance of NN and the empirical observations are in accordance with the theoretical analysis.  
\end{abstract}

\section{Introduction}
One central task in machine learning (ML) is to extract underlying patterns from observed
data \cite{bishop2006pattern,han2011data,fukunaga2013introduction}, which is essential for making effective use of big data for many applications \cite{council2013frontiers,jordan2015machine}.
Among the various ML models and algorithms designed for pattern discovery, latent variable models (LVMs) \cite{rabiner1989tutorial,bishop1998latent,knott1999latent,blei2003latent,hinton2006fast,
airoldi2009mixed,blei2014build} or latent space models (LSMs) \cite{rumelhart1985learning,deerwester1990indexing,olshausen1997sparse,
lee1999learning,xing2002distance} are a large family of models providing a principled and effective way to uncover knowledge hidden behind data and have been widely used in text mining \cite{deerwester1990indexing,blei2003latent}, computer vision \cite{olshausen1997sparse,lee1999learning}, speech recognition \cite{rabiner1989tutorial,hinton2012deep}, computational biology \cite{xing2007bayesian,song2009keller} and recommender systems \cite{gunawardana2008tied,koren2009matrix}. 

Although LVMs have now been widely used, several new challenges have emerged due to the dramatic growth of volume and complexity of data:
(1) In the event that the popularity of patterns behind big data is distributed in a power-law fashion, where a few dominant patterns occur frequently whereas most patterns in the long-tail region are of low popularity \cite{wang2014peacock,xie2015diversifying}, standard LVMs are inadequate to capture the long-tail patterns, which can incur significant information loss \cite{wang2014peacock,xie2015diversifying}. (2) To cope with the rapidly growing complexity of patterns present in big data, ML practitioners typically increase the size and capacity of LVMs, which incurs great challenges for model training, inference, storage and maintenance \cite{xie2015learning}. How to reduce model complexity without compromising expressivity is a challenging issue. (3) There exist substantial redundancy and overlapping amongst patterns discovered by existing LVMs from massive data, making them hard to interpret \cite{wang2015rubik}.

To address these challenges, several recent works \cite{Zou_priorsfor,xie2015diversifying,xie2015learning} have investigated a diversity-promoting regularization technique for LVMs, which controls the geometry of the latent space during learning to encourage the learned latent components of LVMs to be diverse in the sense that they are favored to be mutually "different" from each other. First, concerning the long-tail phenomenon in extracting latent patterns (e.g., clusters, topics) from data: if the model components are biased to be far apart from each other, then one would expect that such components will tend to be less overlapping and less aggregated over dominant patterns (as one often experiences in standard clustering algorithms \cite{Zou_priorsfor}), and therefore more likely to capture the long-tail patterns.
Second, reducing model complexity without sacrificing expressivity: if the model components are preferred to be different from each other,
then the patterns captured by different components are likely to have less redundancy and hence complementary to each other. Consequently, it is possible to use a small set of components to sufficiently capture a large proportion of patterns. Third, improving the interpretability of the learned components: if model components are encouraged to be distinct from each other and non-overlapping, then it would be cognitively easy for human to associate each component to an object or concept in the physical world. Several diversity-inducing regularizers such as Determinantal Point Process \cite{Zou_priorsfor}, mutual angular regularizer \cite{xie2015diversifying} have been proposed to promote diversity in various latent variable models including Gaussian Mixture Model \cite{Zou_priorsfor}, Latent Dirichlet Allocation \cite{Zou_priorsfor}, Restricted Boltzmann Machine \cite{xie2015diversifying}, Distance Metric Learning \cite{xie2015diversifying}.

While the empirical effectiveness of diversity-inducing regularizers has been demonstrated in \cite{Zou_priorsfor,xie2015diversifying,xie2015learning}, their theoretical behaviors are still unclear. In this paper, we aim to bridge this gap and make the first attempt to formally understand why and how introducing diversity into LVMs can lead to better modeling effects. We focus on the mutual angular regularizer proposed in \cite{xie2015diversifying} and analyze how it affects the generalization performance of supervised latent variable models. Specifically, we choose neural network (NN) as a model instance to carry out the analysis while noting that the analysis could be extended to other LVMs such as Restricted Boltzmann Machine and Distance Metric Learning. The major insights distilled from the analysis are: as the diversity (which will be made precise later) of hidden units in NN increases, the estimation error of NN decreases while the approximation error increases; thereby the overall generalization error (which is the sum of estimation error and generalization error) reaches the minimum if an optimal diversity level is chosen. In addition to the theoretical study, we also conduct experiments to empirically show that with the mutual angular regularization, the performance of neural networks can be greatly improved. And the empirical results are consistent with the theoretical analysis.




The major contributions of this paper include:
\begin{itemize}
\item We propose a diversified neural network with mutual angular regularization (MAR-NN).
\item We analyze the generalization performance of MAR-NN, and show that the mutual angular regularizer can help reduce generalization error. 
\item Empirically, we show that mutual angular regularizer can greatly improve the performance of NNs and the experimental results are in accordance with the theoretical analysis.
\end{itemize}

The rest of the paper is organized as follows. Section 2 introduces mutual angle regularized neural networks (MAR-NNs). The estimation and approximation errors of MAR-NN are analyzed in Section 3. Section 4 presents empirical studies of MAR-NN. Section 5 reviews related works and Section 6 concludes the paper. 

\section{Diversify Neural Networks with Mutual Angular Regularizer}
In this section, we review diversity-regularized latent variable models and propose diversified neural networks with mutual angular regularization.

\subsection{Diversity-Promoting Regularization of Latent Variable Models}
Uncover latent patterns from observed data is a central task in big data analytics \cite{bishop2006pattern,han2011data,council2013frontiers,fukunaga2013introduction,jordan2015machine}. Latent variable models \cite{rumelhart1985learning,rabiner1989tutorial,deerwester1990indexing,
olshausen1997sparse,bishop1998latent,knott1999latent,lee1999learning,
xing2002distance,blei2003latent,hinton2006fast,airoldi2009mixed,blei2014build} elegantly fit into this task. The knowledge and structures hidden behind data are usually composed of multiple \textit{patterns}. For instance, the semantics underlying documents contains a set of \textit{themes} \cite{hofmann1999probabilistic,blei2003latent}, such as politics, economics and education.
Accordingly, latent variable models are parametrized by multiple \textit{components} where each component aims to capture one pattern in the knowledge and is represented with a parameter vector. For instance, the components in Latent Dirichlet Allocation \cite{blei2003latent} are called \textit{topics} and each topic is parametrized by a multinomial vector. 

To address the aforementioned three challenges in latent variable modeling: the skewed distribution of pattern popularity, the conflicts between model complexity and expressivity and the poor interpretability of learned patterns, recent works \cite{Zou_priorsfor,xie2015diversifying,xie2015learning} propose to diversify the components in LVMs, by solving a regularized problem:
\begin{equation}
\textrm{max}_{\mb{A}}\quad \mathcal{L}(\mb{A})+\lambda \Omega(\mb{A})
\end{equation}
where each column of $\mb{A}\in \mathbb{R}^{\mathsf{d\times k}}$ is the parameter vector of a component, $\mathcal{L}(\mb{A})$ is the objective function of the original LVM, $\Omega(\mb{A})$ is a regularizer encouraging the components in $\mb{A}$ to be diverse and $\lambda$ is a tradeoff parameter. Several regularizers have been proposed to induce diversity, such as Determinantal Point Process \cite{Zou_priorsfor}, mutual angular regularizer \cite{xie2015diversifying}.

Here we present a detailed review of the mutual angular regularizer \cite{xie2015diversifying} as our theoretical analysis is based on it. This regularizer is defined with the rationale that if each pair of components are mutually different, then the set of components are diverse in general. They utilize the non-obtuse angle $\theta_{ij}=\textrm{arccos}(\frac{|\mb{a}_i\cdot\mb{a}_j|}{\|\mb{a}\|_i\|\mb{a}\|_j})$ to measure the dissimilarity between component $\mb{a}_i$ and $\mb{a}_j$ as angle is insensitive to geometry transformations of vectors such as scaling, translation, rotation, etc. Given a set of components, angles $\{\theta_{ij}\}$ between each pair of components are computed and the MAR is defined as the mean of these angles minus their variance
\begin{equation}
\begin{array}{l}
\Omega(\mb{A})=\frac{1}{K(K-1)}\sum_{i=1}^{K}\sum_{j=1,j\neq i}^{K}\theta_{ij}
-\gamma\frac{1}{K(K-1)}\\\sum_{i=1}^{K}\sum_{j=1,j\neq i}^{K}(\theta_{ij}-\frac{1}{K(K-1)}\sum_{p=1}^{K}\sum_{q=1,q\neq p}^{K}\theta_{pq})^{2}
\end{array}
\end{equation}
where $\gamma>0$ is a tradeoff parameter between mean and variance.
The mean term summarizes how these vectors are different from each on the whole. A larger mean indicates these vectors share a larger angle in general, hence are more diverse. The variance term is utilized to encourage the vectors to evenly spread out to different directions. A smaller variance indicates that these vectors are uniformly different from each other.

\subsection{Neural Network with Mutual Angular Regularization}

Recently, neural networks (NNs) have shown great success in many applications, such as speech recognition \cite{hinton2012deep}, image classification \cite{krizhevsky2012imagenet}, machine translation \cite{bahdanau2014neural}, etc. Neural networks are nonlinear models with large capacity and rich expressiveness. If trained properly, they can capture the complex patterns underlying data and achieve notable performance in many machine learning tasks. NNs are composed of multiple layers of computing units and units in adjacent layers are connected with weighted edges. NNs are a typical type of LVMs where each hidden unit is a component aiming to capture the latent features underlying data and is characterized by a vector of weights connecting to units in the lower layer.

We instantiate the general framework of diversity-regularized LVM to neural network and utilize the mutual angular regularizer to encourage the hidden units (precisely their weight vectors) to be different from each other, which could lead to several benefits: (1) better capturing of long-tail latent features; (2) reducing the size of NN without compromising modeling power. 
Let $\mathcal{L}(\{\mb{A}_{i}\}_{i=0}^{l-1})$ be the loss function of a neural network with $l$ layers where $\mb{A}_{i}$ are the weights between layer $i$ and layer $i+1$, and each column of $\mb{A}_{i}$ corresponds to a unit. A diversified NN with mutual angular regularization (MAR-NN) can be defined as
\begin{equation}
\begin{array}{ll}
\textrm{min}_{\{\mb{A}_{i}\}_{i=0}^{l-1}}&\mathcal{L}(\{\mb{A}_{i}\}_{i=0}^{l-1})-\lambda \sum_{i=0}^{l-2}\Omega(\mb{A}_{i})
\end{array}
\end{equation}
where $\Omega(\mb{A}_{i})$ is the mutual angular regularizer and $\lambda>0$ is a tradeoff parameter. Note that the regularizer is not applied to $\mb{A}_{l-1}$ since in the last layer are output units which are not latent components.

\section{Generalization Error Analysis}
In this section, we analyze how the mutual angular regularizer affects the generalization error of neural networks. Let $L(f)=\mathbb{E}_{(\mb{x},y)\sim p^*}[\ell(\mb{x},y,f)]$ denote the generalization error of hypothesis $f$, where $p^*$ is the distribution of input-output pair $(\mb{x},y)$ and $\ell(\cdot)$ is the loss function. Let $f^*\in \textrm{argmin}_{f\in \mathcal{F}}L(f)$ be the expected risk minimizer. Let $\hat{L}(f)=\frac{1}{n}\sum_{i=1}^{n}\ell(\mb{x}^{(i)},y^{(i)},f)$ be the training error and $\hat{f}\in \textrm{argmin}_{f\in \mathcal{F}}\hat{L}(f)$ be the empirical risk minimizer. We are interested in the generalization error $L(\hat{f})$ of the empirical risk minimizer $\hat{f}$, which can be decomposed into two parts $L(\hat{f})=L(\hat{f})-L(f^*)+L(f^*)$, where $L(\hat{f})-L(f^*)$ is the estimation error (or excess risk) and $L(f^*)$ is the approximation error. The estimation error represents how well the algorithm is able to learn and usually depends on the complexity of the hypothesis and the number of training samples. A lower hypothesis complexity and a larger amount of training data incur lower estimation error bound. The approximation error indicates how expressive the hypothesis set is to effectively approximate the target function.

Our analysis below shows that the mutual angular regularizer can reduce the generalization error of neural networks. We assume with high probability $\tau$, the angle between each pair of hidden units is lower bounded by $\theta$. $\theta$ is a formal characterization of diversity. The larger $\theta$ is, the more diverse these hidden units are. The analysis in the following sections suggests that $\theta$ incurs a tradeoff between estimation error and approximation error: the larger $\theta$ is, the smaller the estimation error bound is and the larger the approximation error bound is. Since the generalization error is the sum of estimation error and approximation error, $\theta$ has an optimal value to yield the minimal generalization error. In addition, we can show that under the same probability $\tau$, increasing the mutual angular regularizer can increase $\theta$. Given a set of hidden units $\mb{A}$ learned by the MAR-NN, we assume their pairwise angles $\{\theta_{ij}\}$ are $i.i.d$ samples drawn from a distribution $p(X)$ where the expectation and variance of random variable $X$ is $\mu$ and $\sigma$ respectively. Lemma \ref{lem:mar_bd} states that $\theta$ is an increasing function of $\mu$ and decreasing function of $\sigma$. By the definition of MAR, it encourages larger mean and smaller variance. Thereby, the larger the MAR is, the larger $\theta$ is. Hence properly controlling the MAR can generate a desired $\theta$ that produces the lowest generalization error.

\begin{lemma}
\label{lem:mar_bd} With probability at least $\tau$, we have 
$X\geq \theta=\mu-\sqrt{\frac{\sigma}{1-\tau}}$
\begin{proof}
According to Chebyshev inequality \cite{wasserman2013all}, 
\begin{equation}
\frac{\sigma}{t^2}\geq p(|X-\mu|>t)\geq p(X<\mu-t)
\end{equation}
Let $\theta=\mu-t$, then $p(X<\theta)\leq \frac{\sigma}{(\mu-\theta)^2}$. Hence $p(X\geq\theta)\geq 1-\frac{\sigma}{(\mu-\theta)^2}$. Let $\tau=1-\frac{\sigma}{(\mu-\theta)^2}$, then $\theta=\mu-\sqrt{\frac{\sigma}{1-\tau}}$.
\end{proof}
\end{lemma}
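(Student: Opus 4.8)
The plan is to obtain a one-sided lower-tail deviation bound for $X$ from its mean and variance and then invert the resulting inequality so that the threshold is expressed in terms of the prescribed confidence $\tau$. Since only the first two moments of $X$ are assumed known, the natural instrument is Chebyshev's inequality.

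First I would recall that Chebyshev's inequality gives, for every $t>0$, the two-sided bound $p(|X-\mu|>t)\leq \sigma/t^2$. Because the event $\{X<\mu-t\}$ is contained in $\{|X-\mu|>t\}$, this at once yields the one-sided statement $p(X<\mu-t)\leq \sigma/t^2$, equivalently $p(X\geq \mu-t)\geq 1-\sigma/t^2$.

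Next I would calibrate $t$ so that the right-hand side equals the target level $\tau$: solving $1-\sigma/t^2=\tau$ gives $t=\sqrt{\sigma/(1-\tau)}$, which is well defined since $\tau<1$. Putting $\theta=\mu-t=\mu-\sqrt{\sigma/(1-\tau)}$ then delivers $p(X\geq\theta)\geq\tau$, which is exactly the claim. From this closed form it is also immediate, as needed in the surrounding discussion, that $\theta$ is increasing in $\mu$ and decreasing in $\sigma$, so that a regularizer promoting larger mean angle and smaller variance drives $\theta$ upward.

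I do not expect a genuine obstacle: the argument is a textbook use of Chebyshev's inequality, and the only points needing care are (i) passing from the two-sided to the one-sided bound in the correct direction, and (ii) the elementary algebra of solving for $t$. One refinement worth mentioning is that replacing Chebyshev by the sharper one-sided Cantelli inequality $p(X<\mu-t)\leq \sigma/(\sigma+t^2)$ would give the tighter threshold $\theta=\mu-\sqrt{\sigma\tau/(1-\tau)}$; either version is enough for the qualitative conclusion that larger mean and smaller variance of the pairwise angles raise $\theta$.
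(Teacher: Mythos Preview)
Your proposal is correct and follows essentially the same route as the paper: apply Chebyshev's inequality, pass to the one-sided tail via the containment $\{X<\mu-t\}\subset\{|X-\mu|>t\}$, and solve $1-\sigma/t^2=\tau$ for $t$ to obtain $\theta=\mu-\sqrt{\sigma/(1-\tau)}$. Your remark about the sharper Cantelli bound is a nice aside but not needed for the lemma as stated.
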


\subsection{Setup}
\label{sec:setup}
For the ease of presentation, we first consider a simple neural network whose setup is described below. Later on we extend the analysis to more complicated neural networks.
\begin{itemize}
\item Network structure: one input layer, one hidden layer and one output layer
\item Activation function: Lipschitz continuous function $h(t)$ with constant $L$. Example: rectified linear $h(t)=\textrm{max}(0,t)$, $L=1$; tanh $h(t)=\textrm{tanh}(t)$, $L=1$; sigmoid $h(t)=\textrm{sigmoid}(t)$, $L=0.25$.
\item Task: univariate regression
\item Let $\mb{x}\in \mathbb{R}^d$ be the input vector with $\|\mb{x}\|_2\leq C_1$
\item Let $y$ be the response value with $|y|\leq C_2$
\item Let $\mb{w_j}\in \mathbb{R}^d$ be the weights connecting to the $j$-th hidden unit, $j=1,\cdots,m$, with $\|\mb{w_j}\|_2\leq C_3$. Further, we assume with high probability $\tau$, the angle $\rho(\mb{w_i},\mb{w_j})=\textrm{arccos}(\frac{|\mb{w_i}\cdot\mb{w_j}|}{\|\mb{w_i}\|_2\|\mb{w_j}\|_2})$ between $\mb{w_i}$ and $\mb{w_j}$ is lower bounded by a constant $\theta$ for all $i\neq j$.
\item Let $\alpha_j$ be the weight connecting the hidden unit $j$ to the output with $\|\mb{\alpha}\|_2\leq C_4$
\item Hypothesis set: $\mathcal{F}=\{f|f(\mb{x})=\sum\limits_{j=1}^{m}\alpha_j h(\mb{w_j}^{\mathsf{T}}\mb{x})\}$
\item Loss function set: $\mathcal{A}=\{\ell|\ell(\mb{x},y)=(f(\mb{x})-y)^2\}$
\end{itemize}

\subsection{Estimation Error}
We first analyze the estimation error bound of MAR-NN and are interested in how the upper bound is related with the diversity (measured by $\theta$) of the hidden units. The major result is presented in Theorem \ref{thm:est_err}. 
\begin{thm}
\label{thm:est_err}
With probability at least $(1-\delta)\tau$
\begin{equation}
\label{eq:est_err}
\begin{array}{lll}
&&L(\hat{f})-L(f^*)\\
&\leq& 8(\sqrt{\mathcal{J}}+C_2)(2LC_1C_3C_4+C_4|h(0)|)\frac{\sqrt{m}}{\sqrt{n}}\\
&&+ (\sqrt{\mathcal{J}}+C_2)^2\sqrt{\frac{2\log(2/\delta)}{n}}
\end{array}
\end{equation}
where $\mathcal{J}=mC_4^2h^2(0)+L^2C_1^2C_3^2C_4^2((m-1)\cos\theta+1) +\\ 2\sqrt{m}C_1C_3C_4^2L|h(0)|\sqrt{(m-1)\cos\theta+1}$.
\end{thm}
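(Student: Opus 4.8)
The plan is to bound the estimation error $L(\hat f)-L(f^*)$ via the standard Rademacher-complexity route and then track how the diversity parameter $\theta$ enters the complexity of the loss class $\mathcal{A}$. First I would invoke the classical symmetrization bound: with probability at least $1-\delta$ over the $n$ samples, $L(\hat f)-L(f^*)\le 4\,\mathfrak{R}_n(\mathcal{A}) + (\text{sup of losses})\sqrt{2\log(2/\delta)/n}$, where $\mathfrak{R}_n(\mathcal{A})$ is the (empirical or expected) Rademacher complexity of the loss-composed class. To make this concrete I need two quantities: (i) a uniform upper bound $B$ on $|f(\mathbf x)|$ over $f\in\mathcal F$, so that $|\ell|\le (B+C_2)^2$; and (ii) a contraction step. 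For (i), $|f(\mathbf x)|=|\sum_j\alpha_j h(\mathbf w_j^{\mathsf T}\mathbf x)|\le \|\boldsymbol\alpha\|_2\,\big(\sum_j h(\mathbf w_j^{\mathsf T}\mathbf x)^2\big)^{1/2}$ by Cauchy–Schwarz, and $|h(\mathbf w_j^{\mathsf T}\mathbf x)|\le L|\mathbf w_j^{\mathsf T}\mathbf x| + |h(0)|\le LC_1C_3+|h(0)|$ by Lipschitzness; but the sharper route that produces $\mathcal J$ is to expand $\sum_j h(\mathbf w_j^{\mathsf T}\mathbf x)^2$ and use $|h(\mathbf w_i^{\mathsf T}\mathbf x)\,h(\mathbf w_j^{\mathsf T}\mathbf x)|\le (L C_1\|\mathbf w_i\|+|h(0)|)(LC_1\|\mathbf w_j\|+|h(0)|)$ together with the key geometric observation that $|\mathbf w_i^{\mathsf T}\mathbf x\;\mathbf w_j^{\mathsf T}\mathbf x|$ — or more precisely the cross terms controlling $\sum_{i\ne j}$ — are damped by $\cos\theta$ because the angle between $\mathbf w_i$ and $\mathbf w_j$ is at least $\theta$ (with probability $\tau$). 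Carrying the bookkeeping through the three groups of terms (the $h(0)^2$ diagonal, the $\cos\theta$-weighted main term, and the cross term between them) yields exactly $B^2=\mathcal J$ with $B=\sqrt{\mathcal J}$, hence $|\ell|\le(\sqrt{\mathcal J}+C_2)^2$, which explains the second summand in \eqref{eq:est_err} and the leading factor $(\sqrt{\mathcal J}+C_2)$ in the first.

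Next I would handle the Rademacher complexity. Since $\ell(\mathbf x,y)=(f(\mathbf x)-y)^2$ and $f(\mathbf x)-y$ ranges in an interval of radius at most $\sqrt{\mathcal J}+C_2$, the map $u\mapsto u^2$ is Lipschitz with constant $2(\sqrt{\mathcal J}+C_2)$ on that range; by Talagrand's contraction lemma, $\mathfrak R_n(\mathcal A)\le 2(\sqrt{\mathcal J}+C_2)\,\mathfrak R_n(\mathcal F)$ (the constant shift by $-y$ does not change the Rademacher complexity). It then remains to bound $\mathfrak R_n(\mathcal F)$ for $\mathcal F=\{\mathbf x\mapsto\sum_{j=1}^m\alpha_j h(\mathbf w_j^{\mathsf T}\mathbf x)\}$. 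Here I would peel the network layer by layer: first pull out $\boldsymbol\alpha$ using a Cauchy–Schwarz / $\ell_2$-ball argument to get a factor $C_4\sqrt m$ times the complexity of a single hidden unit class $\{\mathbf x\mapsto h(\mathbf w^{\mathsf T}\mathbf x):\|\mathbf w\|\le C_3\}$; then apply contraction again (Lipschitz constant $L$, plus the $h(0)$ offset contributing $C_4|h(0)|$ after being multiplied through) and the standard bound $\mathfrak R_n(\{\mathbf x\mapsto\mathbf w^{\mathsf T}\mathbf x:\|\mathbf w\|\le C_3\})\le C_1C_3/\sqrt n$. Collecting constants gives $\mathfrak R_n(\mathcal F)\lesssim (LC_1C_3C_4 + C_4|h(0)|)\sqrt m/\sqrt n$, and multiplying by the $8(\sqrt{\mathcal J}+C_2)$ from the two contraction steps and symmetrization reproduces the first term of \eqref{eq:est_err}. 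Finally, the bound holds on the intersection of the "good sample" event (probability $1-\delta$) and the "diversity" event that all pairwise angles exceed $\theta$ (probability $\tau$), giving the claimed $(1-\delta)\tau$.

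The main obstacle I anticipate is step (i): correctly exploiting the angular lower bound to pass from "each pair of weight vectors is at least $\theta$ apart" to the $((m-1)\cos\theta+1)$ factor inside $\mathcal J$. Unlike a bound on $\|\sum_j \mathbf w_j\|^2$, where $\cos$ of the pairwise angles enters directly, here we need it to control $\sum_{i,j} h(\mathbf w_i^{\mathsf T}\mathbf x)h(\mathbf w_j^{\mathsf T}\mathbf x)$, which involves the inner products $\mathbf w_i^{\mathsf T}\mathbf x$ rather than $\mathbf w_i^{\mathsf T}\mathbf w_j$; the link must go through bounding the Gram-type sum $\sum_{i\ne j}|\mathbf w_i^{\mathsf T}\mathbf x|\,|\mathbf w_j^{\mathsf T}\mathbf x|$ by $\|\sum_i \pm\mathbf w_i\|_2^2\|\mathbf x\|_2^2$-style manipulations where the sign pattern is absorbed and $\|\sum_i\mathbf w_i\|_2^2=\sum_i\|\mathbf w_i\|_2^2+\sum_{i\ne j}\mathbf w_i^{\mathsf T}\mathbf w_j\le mC_3^2 + m(m-1)C_3^2\cos\theta'$ — but since the relevant angle $\rho(\mathbf w_i,\mathbf w_j)$ uses $|\mathbf w_i\cdot\mathbf w_j|$, one has $\mathbf w_i^{\mathsf T}\mathbf w_j \le C_3^2\cos\theta$ only up to sign, so the careful version bounds $\sum_i\|\mathbf w_i\|^2 + \sum_{i\ne j}|\mathbf w_i^{\mathsf T}\mathbf w_j|\le C_3^2 m + C_3^2 m(m-1)\cos\theta$, i.e. $C_3^2 m((m-1)\cos\theta+1)$. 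Getting the offset terms (those involving $h(0)$) to combine cleanly with this into the stated three-part expression for $\mathcal J$ is where the bulk of the routine-but-delicate algebra lies; everything else is a fairly mechanical assembly of textbook Rademacher bounds.
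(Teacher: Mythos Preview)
Your overall architecture is exactly the paper's: the symmetrization bound $L(\hat f)-L(f^*)\le 4\mathcal R_n(\mathcal A)+B\sqrt{2\log(2/\delta)/n}$, Talagrand contraction with Lipschitz constant $2(\sqrt{\mathcal J}+C_2)$ for the squared loss, the layer-peeling bound $\mathcal R_n(\mathcal F)\le (2LC_1C_3C_4+C_4|h(0)|)\sqrt m/\sqrt n$, and the intersection of the sample event with the diversity event to get probability $(1-\delta)\tau$ are all what the paper does.

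The gap is precisely where you flagged it, but your proposed resolution does not work. After Cauchy--Schwarz you need to bound $\sum_j h(\mathbf w_j^{\mathsf T}\mathbf x)^2$, which is a sum of \emph{squares}: there are no $i\ne j$ cross terms to ``expand'', so the remarks about ``cross terms damped by $\cos\theta$'' and ``controlling $\sum_{i,j}h(\mathbf w_i^{\mathsf T}\mathbf x)h(\mathbf w_j^{\mathsf T}\mathbf x)$'' target the wrong quantity. Applying $|h(t)|\le L|t|+|h(0)|$ termwise reduces the task to bounding $\sum_j(\mathbf w_j^{\mathsf T}\mathbf x)^2=\|W^{\mathsf T}\mathbf x\|_2^2$; your ``$\|\sum_i\pm\mathbf w_i\|_2^2$-style'' manipulation bounds $(\sum_j\mathbf w_j^{\mathsf T}\mathbf x)^2$ rather than $\sum_j(\mathbf w_j^{\mathsf T}\mathbf x)^2$, and the estimate $C_3^2\, m((m-1)\cos\theta+1)$ you arrive at is a factor of $m$ too large for the stated $\mathcal J$.

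The paper's missing ingredient is an operator-norm/eigenvalue step: $\|W^{\mathsf T}\mathbf x\|_2^2\le \|W\|_{\mathrm{op}}^2\|\mathbf x\|_2^2$ with $\|W\|_{\mathrm{op}}^2=\lambda_{\max}(W^{\mathsf T}W)$. Since $(W^{\mathsf T}W)_{jj}\le C_3^2$ and $|(W^{\mathsf T}W)_{jk}|=|\mathbf w_j\cdot\mathbf w_k|\le C_3^2\cos\theta$ for $j\ne k$ (this is exactly where the angular lower bound enters), one has $u^{\mathsf T}W^{\mathsf T}Wu\le C_3^2\,u'^{\mathsf T}Qu'$ with $u'_j=|u_j|$ and $Q$ the $m\times m$ matrix having $1$ on the diagonal and $\cos\theta$ off it. Its top eigenvalue is $(m-1)\cos\theta+1$, giving $\|W\|_{\mathrm{op}}^2\le C_3^2((m-1)\cos\theta+1)$ with no spurious $m$. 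Substituting into $\|\mathbf h\|_2^2\le m\,h(0)^2+L^2\|W^{\mathsf T}\mathbf x\|_2^2+2\sqrt m\,L|h(0)|\,\|W^{\mathsf T}\mathbf x\|_2$ and multiplying by $C_4^2$ yields the three-term $\mathcal J$ exactly as stated.
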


Note that the right hand side is a decreasing function w.r.t $\theta$. A larger $\theta$ (denoting the hidden units are more diverse) would induce a lower estimation error bound. 

\subsubsection{Proof}
\label{sec:thm1_proof}
A well established result in learning theory is that the estimation error can be upper bounded by the Rademacher complexity. We start from the Rademacher complexity, seek a further upper bound of it and show how the diversity of the hidden units affects this upper bound.
The Rademacher complexity $\mathcal{R}_n(\mathcal{A})$ of the loss function set $\mathcal{A}$ is defined as 
\begin{equation}
\begin{array}{lll}
\mathcal{R}_n(\mathcal{A})&=&\mathbb{E}[\textrm{sup}_{\ell\in \mathcal{A}}\frac{1}{n}\sum_{i=1}^{n}\sigma_i \ell(f(\mb{x}^{(i)}), y^{(i)})]\\
\end{array}
\end{equation}
where $\sigma_i$ is uniform over $\{-1,1\}$ and $\{(\mb{x}^{(i)}, y^{(i)})\}_{i=1}^{n}$ are i.i.d samples drawn from $p^*$. The Rademacher complexity can be utilized to upper bound the estimation error, as shown in Lemma \ref{lem:rc_bd}.
\begin{lemma}\cite{anthony1999neural,bartlett2003rademacher,liang2015lecture}
\label{lem:rc_bd}
With probability at least $1-\delta$
\begin{equation}
L(\hat{f})-L(f^*)\leq 4\mathcal{R}_n(\mathcal{A})+B\sqrt{\frac{2\log(2/\delta)}{n}}
\end{equation}
for $B \ge  \sup_{\mb{x}, y, f}|\ell(f(\mb{x}), y)|$
\end{lemma}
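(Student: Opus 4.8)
The plan is to reconstruct the classical chain ``ERM optimality $\Rightarrow$ uniform deviation $\Rightarrow$ McDiarmid concentration $\Rightarrow$ symmetrization'' that underlies this statement, rather than treating it purely as a black box from \cite{anthony1999neural,bartlett2003rademacher,liang2015lecture}. Write $S=\{(\mb{x}^{(i)},y^{(i)})\}_{i=1}^{n}$, let $\ell_{f}(\mb{x},y)=\ell(f(\mb{x}),y)$, and set $\Phi_{+}(S)=\sup_{f\in\mathcal{F}}\big(L(f)-\hat{L}(f)\big)$ and $\Phi_{-}(S)=\sup_{f\in\mathcal{F}}\big(\hat{L}(f)-L(f)\big)$. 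The first step removes the two unobservable minimizers: since $\hat{f}$ minimizes $\hat{L}$ over $\mathcal{F}$ we have $\hat{L}(\hat{f})-\hat{L}(f^{*})\le 0$, so
\begin{equation}
L(\hat{f})-L(f^{*})=\big(L(\hat{f})-\hat{L}(\hat{f})\big)+\big(\hat{L}(\hat{f})-\hat{L}(f^{*})\big)+\big(\hat{L}(f^{*})-L(f^{*})\big)\le \Phi_{+}(S)+\Phi_{-}(S),
\end{equation}
because the middle bracket is nonpositive and $\hat{f},f^{*}\in\mathcal{F}$.

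The second step shows that $\Phi_{+}$ and $\Phi_{-}$ concentrate around their means. Since $0\le \ell_{f}\le B$ for every $f,\mb{x},y$, replacing one sample of $S$ changes any empirical average $\hat{L}(f)$ by at most $B/n$ uniformly in $f$, so each $\Phi_{\pm}$ has the bounded-differences property with constants $B/n$. McDiarmid's inequality then gives, for each sign separately, with probability at least $1-\delta/2$,
\begin{equation}
\Phi_{\pm}(S)\le \mathbb{E}_{S}[\Phi_{\pm}(S)]+B\sqrt{\frac{\log(2/\delta)}{2n}}.
\end{equation}

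The third step bounds the two expectations by the Rademacher complexity through the ghost-sample argument: introduce an i.i.d.\ copy $S'$ of $S$ so that $L(f)=\mathbb{E}_{S'}[\hat{L}_{S'}(f)]$, pull the supremum inside the expectation by Jensen, and insert i.i.d.\ Rademacher signs $\sigma_{i}$ (legitimate because swapping the $i$-th term between $S$ and $S'$ is distribution-preserving), which yields $\mathbb{E}_{S}[\Phi_{\pm}(S)]\le 2\mathcal{R}_{n}(\mathcal{A})$. Plugging this into the previous display, taking a union bound over the two signs, and adding the two one-sided bounds via the first step gives, with probability at least $1-\delta$,
\begin{equation}
L(\hat{f})-L(f^{*})\le 4\mathcal{R}_{n}(\mathcal{A})+2B\sqrt{\frac{\log(2/\delta)}{2n}}=4\mathcal{R}_{n}(\mathcal{A})+B\sqrt{\frac{2\log(2/\delta)}{n}},
\end{equation}
which is exactly the assertion of the lemma; the constant $4$ is $2$ (the two one-sided suprema) times $2$ (symmetrization).

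The argument itself is routine, so there is no real conceptual obstacle; the only delicate part is constant bookkeeping. One must use the $[0,B]$ range of the (squared) loss rather than $[-B,B]$ in McDiarmid, which is what makes the two $B\sqrt{\log(2/\delta)/(2n)}$ terms combine into $B\sqrt{2\log(2/\delta)/n}$; one must split the failure probability evenly between $\Phi_{+}$ and $\Phi_{-}$ to produce the $2/\delta$ inside the logarithm; and one must apply the one-sided symmetrization inequality in the correct direction for each sign, which is harmless because the $\sigma_{i}$ are symmetric, so the definition of $\mathcal{R}_{n}(\mathcal{A})$ used here---with a plain supremum rather than a supremum of an absolute value---already suffices. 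Within the paper it would equally be acceptable simply to invoke the lemma directly from \cite{anthony1999neural,bartlett2003rademacher,liang2015lecture}.
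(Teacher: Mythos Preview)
Your reconstruction is correct and is exactly the standard ERM--uniform-deviation--McDiarmid--symmetrization chain underlying this inequality; the constant bookkeeping (the $[0,B]$ range for the squared loss, the $\delta/2$ split, and the $2\times 2$ accounting for the factor $4$) is handled carefully. The paper itself does not prove this lemma at all---it is stated with citations to \cite{anthony1999neural,bartlett2003rademacher,liang2015lecture} and used as a black box---so your write-up simply goes beyond what the paper does, and you already acknowledge that invoking the references directly would suffice.
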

Our analysis starts from this lemma and we seek further upper bound of $\mathcal{R}_n(\mathcal{A})$. The analysis needs an upper bound of the Rademacher complexity of the hypothesis set $\mathcal{F}$, which is given in Lemma \ref{lem:rc_f}.
\begin{lemma}
\label{lem:rc_f}
Let $\mathcal{R}_n(\mathcal{F})$ denote the Rademacher complexity of the hypothesis set $\mathcal{F}=\{f|f(\mb{x})=\sum\limits_{j=1}^{m}\alpha_j h(\mb{w_j}^{\mathsf{T}}\mb{x})\}$, then
\begin{equation}
\mathcal{R}_n(\mathcal{F})\leq \frac{2 L C_{1}C_{3}C_{4}\sqrt{m}}{\sqrt{n}} + \frac{C_{4}|h(0)|\sqrt{m}}{\sqrt{n}}
\end{equation}
\end{lemma}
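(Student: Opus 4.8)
The plan is to begin from the definition of the Rademacher complexity, substitute the explicit parametric form of $f$, and then strip away the three ingredients of $\mathcal{F}$ — the output weights $\bs\alpha$, the offset $h(0)$ of the activation, and the nonlinearity $h$ itself — one at a time. Concretely, I would write
$$\mathcal{R}_n(\mathcal{F})=\mathbb{E}_{\bs\sigma}\Big[\sup_{\bs\alpha,\{\mb{w}_j\}}\frac{1}{n}\sum_{i=1}^{n}\sigma_i\sum_{j=1}^{m}\alpha_j\, h(\mb{w}_j^{\mathsf T}\mb{x}^{(i)})\Big]$$
and split $h(t)=\phi(t)+h(0)$ with $\phi(t):=h(t)-h(0)$, which is $L$-Lipschitz and satisfies $\phi(0)=0$. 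By sub-additivity of the supremum this bounds $\mathcal{R}_n(\mathcal{F})$ by the sum of a ``nonlinear'' term built from $\phi$ and a ``bias'' term proportional to $h(0)\big(\sum_j\alpha_j\big)\big(\frac1n\sum_i\sigma_i\big)$.

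For the bias term I would use Cauchy--Schwarz, $|\sum_j\alpha_j|\le\sqrt{m}\,\|\bs\alpha\|_2\le\sqrt{m}\,C_4$, together with Jensen's inequality, $\mathbb{E}_{\bs\sigma}\big|\frac1n\sum_i\sigma_i\big|\le\big(\mathbb{E}_{\bs\sigma}(\frac1n\sum_i\sigma_i)^2\big)^{1/2}=1/\sqrt{n}$; this yields exactly the second summand $C_4|h(0)|\sqrt{m}/\sqrt{n}$. For the nonlinear term, I would first apply Cauchy--Schwarz in $\bs\alpha$ to extract $\|\bs\alpha\|_2\le C_4$, so that what remains is $C_4$ times $\mathbb{E}_{\bs\sigma}$ of the supremum over $\{\mb{w}_j\}$ of $\big(\sum_j Z_j^2\big)^{1/2}$, where $Z_j=\frac1n\sum_i\sigma_i\phi(\mb{w}_j^{\mathsf T}\mb{x}^{(i)})$. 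Since the constraints $\|\mb{w}_j\|_2\le C_3$ decouple across $j$, the supremum of $\sum_j Z_j^2$ factorizes and equals $m$ times the squared supremum over a single weight vector, giving $C_4\sqrt{m}\,\mathbb{E}_{\bs\sigma}\sup_{\|\mb{w}\|_2\le C_3}\big|\frac1n\sum_i\sigma_i\phi(\mb{w}^{\mathsf T}\mb{x}^{(i)})\big|$.

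The final step is to remove $\phi$ by the Talagrand contraction lemma in its absolute-value form (legitimate precisely because $\phi(0)=0$), at the cost of a factor $2L$, reducing the problem to the linear class $\{\mb{x}\mapsto\mb{w}^{\mathsf T}\mb{x}:\|\mb{w}\|_2\le C_3\}$; its Rademacher average is $\frac{C_3}{n}\mathbb{E}_{\bs\sigma}\big\|\sum_i\sigma_i\mb{x}^{(i)}\big\|_2\le\frac{C_3}{n}\big(\sum_i\|\mb{x}^{(i)}\|_2^2\big)^{1/2}\le C_1C_3/\sqrt{n}$, using the dual formula for the $\ell_2$ norm and then Jensen. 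Multiplying the constants gives $2LC_1C_3C_4\sqrt{m}/\sqrt{n}$ for the nonlinear term, and adding the two contributions completes the proof. \textbf{The step I expect to be the main obstacle} is handling the activation offset correctly: applying contraction directly to $h$ is not valid in the absolute-value form we are forced into by the Cauchy--Schwarz step over $\bs\alpha$, so the constant $h(0)$ must be peeled off first and bounded by hand — it is exactly this maneuver that produces the extra $C_4|h(0)|\sqrt{m}/\sqrt{n}$ term. The decoupling of the joint supremum over $\{\mb{w}_j\}$ into a per-unit supremum, and keeping track of the factor $2$ from the absolute-value contraction, are the remaining points where care is needed.
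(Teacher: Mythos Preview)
Your proof is correct and follows essentially the same route as the paper: reduce the $\bs\alpha$-dependence to a single-unit absolute-value Rademacher term (picking up the factor $\sqrt{m}\,C_4$), peel off the constant $h(0)$ to produce the second summand, apply the Ledoux--Talagrand contraction inequality (factor $2L$), and bound the remaining linear class by $C_1C_3/\sqrt{n}$ via Cauchy--Schwarz and Jensen. The only cosmetic differences are the order of operations --- you split off $h(0)$ \emph{before} eliminating $\bs\alpha$, whereas the paper does it after --- and the choice of duality in the $\bs\alpha$ step: you use Cauchy--Schwarz $\langle\bs\alpha,\mb{Z}\rangle\le\|\bs\alpha\|_2\|\mb{Z}\|_2$ followed by the decoupling $\sup_{\{\mb w_j\}}\|\mb{Z}\|_2=\sqrt{m}\sup_{\mb w}|Z|$, while the paper uses H\"older $\langle\bs\alpha,\mb{h}\rangle\le\|\bs\alpha\|_1\|\mb{h}\|_\infty$ together with $\|\bs\alpha\|_1\le\sqrt{m}\,\|\bs\alpha\|_2$. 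Both routes land on the identical quantity $\sqrt{m}\,C_4\,\mathbb{E}_{\bs\sigma}\sup_{\|\mb w\|\le C_3}\big|\tfrac1n\sum_i\sigma_i h(\mb w^{\mathsf T}\mb x^{(i)})\big|$ and the same final constants.
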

\begin{proof}
\begin{equation}
\begin{array}{lll}
\mathcal{R}_n(\mathcal{F}) &= \mathbb{E}[\sup_{f\in \mathcal{F}}\frac{1}{n}\sum_{i=1}^n\sigma_i\sum_{j=1}^m \alpha_j h(\mb{w_j}^T \mb{x_i})]\\
&= \mathbb{E}[\sup_{f\in \mathcal{F}}\frac{1}{n}\sum_{j=1}^m \alpha_j \sum_{i=1}^n \sigma_i h(\mb{w_j}^T \mb{x_i})]
\end{array}
\end{equation}
Let $\mb{\alpha} = [\alpha_1, \cdots, \alpha_m]^T$ and $\mb{h} = [\sum_{i=1}^n\sigma_ih(\mb{w_1}^T\mb{x_i}),\cdots,\sum_{i=1}^n\sigma_ih(\mb{w_m}^T\mb{x_i})]^T$, the inner product $\mb{\alpha}\cdot\mb{h}\le\|\mb{\alpha}\|_1\|\mb{x}\|_\infty$ as $\|\cdot\|_1$ and $\|\cdot\|_\infty$ are dual norms. Therefore
\begin{equation}
\begin{array}{lll}
&\mb{\alpha}\cdot\mb{h} \\
\le&\|\mb{\alpha}\|_1 \|\mb{h}\|_\infty\\
=& (\sum_{j=1}^m |\alpha_j|)(\max_{j=1,\cdots,m}|\sum_{i=1}^n \sigma_ih(\mb{w_j}^T \mb{x_i})|)\\
\le& \sqrt{m} \|\mb{\alpha}\|_2\cdot \max_{j=1,\cdots,m}|\sum_{i=1}^n\sigma_ih(\mb{w_j}^T\mb{x_i})|\\
\le&\sqrt{m}C_{4}\cdot \max_{j=1,\cdots,m}|\sum_{i=1}^n\sigma_ih(\mb{w_j}^T\mb{x_i})|
\end{array}
\end{equation}
So $\mathcal{R}_n(\mathcal{F})\le \sqrt{m}C_4\mathbb{E}[\sup_{f\in \mathcal{F}}\frac{1}{n}|\sum_{i=1}^n\sigma_ih(\mb{w_j}^T\mb{x_i})|]$. Denote $\mathcal{R}_{||}(\mathcal{F})=\mathbb{E}[\sup_{f\in\mathcal{F}}|\frac{2}{n}\sigma_i f(\mb{x_i})|]$, which is another form of Rademacher complexity used in some literature such as \cite{bartlett2003rademacher}. Let $\mathcal{F}'=\{f'|f'(\mb{x})=h(\mb{w}^T\mb{x})\}$ where $\mb{w}, \mb{x}$ satisfy the conditions specified in Section \ref{sec:setup}, then $\mathcal{R}_n(\mathcal{F})\le\frac{\sqrt{m}C_{4}}{2}\mathcal{R}_{||}(\mathcal{F'})$. 

Let $\mathcal{G} = \{g|g(\mb{x})=\mb{w}^T\mb{x}\}$ where $\mb{w}, \mb{x}$ satisfy the conditions specified in Section \ref{sec:setup}, then $\mathcal{R}_{||}(\mathcal{F'})=\mathcal{R}_{||}(h\circ g)$. Let $h'(\cdot) = h(\cdot) - h(0)$, then $h'(0)=0$ and $h'$ is also L-Lipschitz. Then
\begin{equation}
\label{eq:R_h_g}
\begin{array}{lll}
&\mathcal{R}_{||}(\mathcal{F'})\\
=&\mathcal{R}_{||}(h\circ g) \\
=& \mathcal{R}_{||}(h'\circ g+h(0))\\
\le &\mathcal{R}_{||}(h'\circ g) + \frac{2|h(0)|}{\sqrt{n}} \textrm{ (Theorem 12 in \cite{bartlett2003rademacher})}\\
\le& 2L \mathcal{R}_{||}(g)+ \frac{2|h(0)|}{\sqrt{n}} \textrm{ (Theorem 12 in \cite{bartlett2003rademacher})}
\end{array}
\end{equation}
Now we bound $\mathcal{R}_{||}(g)$:
\begin{equation}
\label{eq:R_g}
\begin{array}{lll}
&\mathcal{R}_{||}(g)\\
=&\mathbb{E}[\sup_{g\in{\mathcal{G}}}|\frac{2}{n}\sum_{i=1}^n\sigma_i\mb{w}^T\mb{x_i}|]\\
\le &\frac{2}{n}\mathbb{E}[\sup_{g\in\mathcal{G}}\|\mb{w}\|_2\cdot\|\sum_{i=1}^n\sigma_i\mb{x_i}\|]\\
\le &\frac{2C_3}{n}\mathbb{E}[\|\sum_{i=1}^n\sigma_i\mb{x_i}\|_2]\\
=&\frac{2C_3}{n}\mathbb{E}_{\mb{x}}[\mathbb{E}_{\sigma}[\|\sum_{i=1}^n\sigma_i\mb{x_i}\|_2] ]\\
\le &\frac{2C_3}{n}\mathbb{E}_{\mb{x}}[\sqrt{\mathbb{E}_{\sigma}[\|\sum_{i=1}^n\sigma_i\mb{x_i}\|_2^2]} ]\textrm{ (concavity of $\sqrt{\cdot}$)}\\
=&\frac{2C_3}{n}\mathbb{E}_{\mb{x}}[\sqrt{\mathbb{E}_{\sigma}[\sum_{i=1}^n \sigma_i^2\mb{x_i}^2]}]\textrm{ ($\forall i\neq j\ \sigma_i\ci\sigma_j$)}\\
=&\frac{2C_3}{n}\mathbb{E}_{\mb{x}}[\sqrt{\sum_{i=1}^n \mb{x_i}^2}]\\
\le& \frac{2C_1C_3}{\sqrt{n}}
\end{array}
\end{equation}
Putting Eq.(\ref{eq:R_h_g}) and Eq.(\ref{eq:R_g}) together, we have $\mathcal{R}_{||}(\mathcal{F'})\le \frac{4LC_1C_3}{\sqrt{n}}+\frac{2|h(0)|}{\sqrt{n}}$. Plugging into $\mathcal{R}_n(\mathcal{F})\le\frac{\sqrt{m}C_{4}}{2}\mathcal{R}_{||}(\mathcal{F'})$ completes the proof.
\end{proof}

In addition, we need the following bound of $|f(\mb{x})|$.
\begin{lemma}
With probability at least $\tau$
\begin{equation}
\sup_{\mb{x},f}|f(\mb{x})| \le \sqrt{\mathcal{J}}
\end{equation}
where $\mathcal{J}=mC_4^2h^2(0)+L^2C_1^2C_3^2C_4^2((m-1)\cos\theta+1) + 2\sqrt{m}C_1C_3C_4^2L|h(0)|\sqrt{(m-1)\cos\theta+1}$.
\end{lemma}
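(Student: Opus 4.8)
The plan is to bound $|f(\mb{x})|$ pointwise for every $f\in\mathcal{F}$ and every admissible $\mb{x}$, letting the diversity parameter $\theta$ enter through a spectral bound on the Gram matrix of the weight vectors. First I would write $f(\mb{x})=\mb{\alpha}^{\mathsf{T}}\mb{v}(\mb{x})$ with $\mb{v}(\mb{x})=[h(\mb{w_1}^{\mathsf{T}}\mb{x}),\dots,h(\mb{w_m}^{\mathsf{T}}\mb{x})]^{\mathsf{T}}$, so that Cauchy--Schwarz together with $\|\mb{\alpha}\|_2\le C_4$ gives $|f(\mb{x})|\le C_4\|\mb{v}(\mb{x})\|_2$. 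It then suffices to show $\|\mb{v}(\mb{x})\|_2^2\le \mathcal{J}/C_4^2$.

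Next I would strip the bias at the origin. Setting $h'(t)=h(t)-h(0)$, which is $L$-Lipschitz with $h'(0)=0$, we get $|h'(\mb{w_j}^{\mathsf{T}}\mb{x})|\le L|\mb{w_j}^{\mathsf{T}}\mb{x}|$ and the expansion $h^2(\mb{w_j}^{\mathsf{T}}\mb{x})=h'^2(\mb{w_j}^{\mathsf{T}}\mb{x})+2h(0)h'(\mb{w_j}^{\mathsf{T}}\mb{x})+h^2(0)$. Summing over $j$ produces three contributions: $\sum_j h'^2(\mb{w_j}^{\mathsf{T}}\mb{x})$, a cross term $2h(0)\sum_j h'(\mb{w_j}^{\mathsf{T}}\mb{x})$, and $mh^2(0)$. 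All of these are controlled by the single quantity $S:=\sum_{j=1}^m(\mb{w_j}^{\mathsf{T}}\mb{x})^2$: the first is at most $L^2 S$; the cross term is at most $2|h(0)|\sqrt{m}\sqrt{\sum_j h'^2(\mb{w_j}^{\mathsf{T}}\mb{x})}\le 2|h(0)|\sqrt{m}\,L\sqrt{S}$ by Cauchy--Schwarz over the index set $\{1,\dots,m\}$; and the third is constant.

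The heart of the argument, and the step I expect to be the main obstacle, is bounding $S$ in terms of $\theta$. Writing $\mb{W}=[\mb{w_1},\dots,\mb{w_m}]$ we have $S=\|\mb{W}^{\mathsf{T}}\mb{x}\|_2^2\le\|\mb{x}\|_2^2\,\lambda_{\max}(\mb{W}^{\mathsf{T}}\mb{W})\le C_1^2\lambda_{\max}(\mb{W}^{\mathsf{T}}\mb{W})$. On the event of probability at least $\tau$ that every pairwise angle $\rho(\mb{w_i},\mb{w_j})$ exceeds $\theta$, the Gram matrix $\mb{W}^{\mathsf{T}}\mb{W}$ has diagonal entries $\|\mb{w_j}\|_2^2\le C_3^2$ and off-diagonal entries bounded by $|\mb{w_i}^{\mathsf{T}}\mb{w_j}|=\|\mb{w_i}\|_2\|\mb{w_j}\|_2\cos\rho(\mb{w_i},\mb{w_j})\le C_3^2\cos\theta$, since $\cos$ is decreasing on $[0,\pi/2]$. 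Gershgorin's circle theorem then gives $\lambda_{\max}(\mb{W}^{\mathsf{T}}\mb{W})\le C_3^2+(m-1)C_3^2\cos\theta=C_3^2((m-1)\cos\theta+1)$, which is precisely the factor appearing throughout $\mathcal{J}$. This is the only place the probability $\tau$ is used, so the conclusion inherits probability at least $\tau$.

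It then remains to collect terms. Substituting $S\le C_1^2C_3^2((m-1)\cos\theta+1)$ into the three contributions above and multiplying the resulting bound on $\|\mb{v}(\mb{x})\|_2^2$ by $C_4^2$ yields $|f(\mb{x})|^2\le C_4^2\big(mh^2(0)+L^2C_1^2C_3^2((m-1)\cos\theta+1)+2\sqrt{m}\,C_1C_3L|h(0)|\sqrt{(m-1)\cos\theta+1}\big)$, which is exactly $\mathcal{J}$; taking square roots and a supremum over $\mb{x}$ and $f$ completes the proof. The residual work is purely bookkeeping: matching the three summands term by term against $\mathcal{J}$ and confirming that the Lipschitz constant $L$ and the norm bounds $C_1,C_3,C_4$ appear with the correct powers.
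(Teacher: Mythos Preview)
Your proposal is correct and follows essentially the same route as the paper: Cauchy--Schwarz to reduce to $\|\mb{v}(\mb{x})\|_2$, the Lipschitz split $|h(t)|\le L|t|+|h(0)|$ (which you phrase via $h'=h-h(0)$), and then a spectral bound on $\|\mb{W}^{\mathsf{T}}\mb{x}\|_2$ exploiting the angle lower bound. The only cosmetic difference is that you invoke Gershgorin on $\mb{W}^{\mathsf{T}}\mb{W}$ directly, whereas the paper first dominates $\mb{W}^{\mathsf{T}}\mb{W}$ entrywise by $C_3^2 Q$ with $Q_{jj}=1$, $Q_{jk}=\cos\theta$ and then computes $\lambda_{\max}(Q)=(m-1)\cos\theta+1$ explicitly; both yield the identical bound $C_3^2((m-1)\cos\theta+1)$.
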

\begin{proof}
Let $\mb{\alpha} = [\alpha_1,\cdots,\alpha_m]^T$, $W=[\mb{w_1},\cdots,\mb{w_m}]$, $\mb{h} = [h(\mb{w_1}^T\mb{x}), \cdots, h(\mb{w_m}^T\mb{x})]^T$, then we have
\begin{equation}
\label{eq:f_square}
\begin{array}{lll}
&&f^2(\mb{x})\\
&=& (\sum_{j=1}^m \alpha_j h(\mb{w_j}^T \mb{x}))^2\\
&=& (\mb{\alpha}\cdot \mb{h})^2\\
&\le& (\|\mb{\alpha}\|_2 \|\mb{h}\|_2)^2\\
&\le& C_4^2 \|\mb{h}\|_2^2
\end{array}
\end{equation}

Now we want to derive an upper bound for $\|\mb{h}\|_2$. As $h(t)$ is L-Lipschitz, $|h(\mb{w_j}^T \mb{x})|\le L|\mb{w_j}^T\mb{x}| + |h(0)|$. Therefore
\begin{equation}
\label{eq:bound_h}
\begin{array}{lll}
&& \|\mb{h}\|_2^2\\
&=& \sum_{j=1}^m h^2(\mb{w_j}^T \mb{x})\\
&\le& \sum_{j=1}^m (L|\mb{w_j}^T\mb{x}| + |h(0)|)^2\\
&=& \sum_{j=1}^m h^2(0)+L^2(\mb{w_j}^T\mb{x})^2 + 2L|h(0)||\mb{w_j}^T\mb{x}|\\
&=& mh^2(0)+L^2\|W^T\mb{x}\|_2^2 + 2L|h(0)||W^T\mb{x}|_1\\
&\le& mh^2(0)+L^2\|W^T\mb{x}\|_2^2 + 2\sqrt{m}L|h(0)|\|W^T\mb{x}\|_2\\
&\le& mh^2(0)+L^2\|W^T\|_{op}^2\|\mb{x}\|_2^2 \\
  &&+ 2\sqrt{m}L|h(0)|\|W^T\|_{op}\|\mb{x}\|_2\\
&=& mh^2(0)+L^2\|W\|_{op}^2\|\mb{x}\|_2^2 \\
&&+ 2\sqrt{m}L|h(0)|\|W\|_{op}\|\mb{x}\|_2\\
&\le& mh^2(0)+L^2C_1^2\|W\|_{op}^2 + 2\sqrt{m}C_1L|h(0)|\|W\|_{op}\\
\end{array}
\end{equation}
where $\|\cdot\|_{op}$ denotes the operator norm. We can make use of the lower bound of $\rho(\mb{w_j}, \mb{w_k})$ for $j\neq k$ to get a bound for $\|W\|_{op}$:
\begin{equation}
\begin{array}{lll}
&&\|W\|_{op}^2\\
&=&\sup_{\|u\|_2=1} \|W u\|_2^2\\
&=& \sup_{\|u\|_2=1} (u^T W^T W u)\\
&=& \sup_{\|u\|_2=2} \sum_{j=1}^{m} \sum_{k=1}^{m} u_j u_k \mb{w_j}\cdot \mb{w_k}\\
&\le& \sup_{\|u\|_2=2} \sum_{j=1}^{m} \sum_{k=1}^{m} \\
&&|u_j| |u_k| |\mb{w_j}||\mb{w_k}|\cos(\rho(\mb{w_j},\mb{w_k}))\\
&\le& C_3^2\sup_{\|u\|_2=2} \sum_{j=1}^{m} \sum_{k=1,k\neq j}^{m} \\
&&|u_j| |u_k| \cos\theta + \sum_{j=1}^{m}|u_j|^2 \\
&&\text{(with probability at least $\tau$)}\\
\end{array}
\end{equation}
Define $u'=[|u_1|, \cdots, |u_{m^p}|]^T$, $Q\in \mathbb{R}^{m^p\times m^p}$: $Q_{jk}=\cos\theta$ for $j\neq k$ and $Q_{jj}=1$, then $\|u'\|_2 = \|u\|$ and
\begin{equation}
\begin{array}{lll}
&&\|W\|_{op}^2\\
&\le& C_3^2\sup_{\|u\|_2=2} u'^T Q u'\\
&\le& C_3^2\sup_{\|u\|_2=2} \lambda_1(Q)\|u'\|_2^2\\
&\le& C_3^2 \lambda_1(Q)
\end{array}
\end{equation}
where $\lambda_1(Q)$ is the largest eigenvalue of $Q$. By simple linear algebra we can get $\lambda_1(Q) = (m-1)\cos\theta + 1$, so
\begin{equation}
\|W\|_{op}^2 \le ((m-1)\cos\theta + 1)C_3^2
\end{equation}
Substitute to Eq.(\ref{eq:bound_h}), we have
\begin{multline}
 \|\mb{h}\|_2^2\le mh^2(0)+L^2C_1^2C_3^2((m-1)\cos\theta+1) +\\ 2\sqrt{m}C_1C_3L|h(0)|\sqrt{(m-1)\cos\theta+1}
\end{multline}
Substitute to Eq.(\ref{eq:f_square}):
\begin{equation}
\label{eq:f_square_2}
\begin{array}{lll}
&&f^2(\mb{x})\\
&\le& mC_4^2h^2(0)+L^2C_1^2C_3^2C_4^2((m-1)\cos\theta+1) +\\ &&2\sqrt{m}C_1C_3C_4^2L|h(0)|\sqrt{(m-1)\cos\theta+1}
\end{array}
\end{equation}
In order to simplify our notations, define 
\begin{multline}
\mathcal{J}=mC_4^2h^2(0)+L^2C_1^2C_3^2C_4^2((m-1)\cos\theta+1) +\\ 2\sqrt{m}C_1C_3C_4^2L|h(0)|\sqrt{(m-1)\cos\theta+1}
\end{multline}
Then $\sup_{\mb{x},f}|f(\mb{x})|\le \sqrt{\sup_{\mb{x},f}f^2(\mb{x})} = \sqrt{\mathcal{J}}$. Proof completes.
\end{proof}

Given these lemmas, we proceed to prove Theorem \ref{thm:est_err}. 
The Rademacher complexity $\mathcal{R}_n(\mathcal{A})$ of $\mathcal{A}$ is
\begin{equation}
\label{eq:ra}
\begin{array}{lll}
&&\mathcal{R}_n(\mathcal{A})\\
&=&\mathbb{E}[\textrm{sup}_{f\in \mathcal{F}}\frac{1}{n}\sum_{i=1}^{n}\sigma_i \ell(f(\mb{x}),y)]
\end{array}
\end{equation}
$\ell(\cdot,y)$ is Lipschitz continuous with respect to the first argument, and the constant $L$ is $\sup_{\mb{x},y,f}|f(\mb{x})-y|\le2\sup_{\mb{x},y,f}(|f(\mb{x})|+|y|) = 2(\sqrt{\mathcal{J}}+C_2)$. Applying the composition property of Rademacher complexity, we have
\begin{equation}
\mathcal{R}_n(\mathcal{A}) \le 2(\sqrt{\mathcal{J}}+C_2)\mathcal{R}_n(\mathcal{F})
\end{equation}
Using Lemma \ref{lem:rc_f}, we have
\begin{equation}
\label{eq:R_A}
\mathcal{R}_n(\mathcal{A}) \le 2(\sqrt{\mathcal{J}}+C_2)(\frac{2 L C_{1}C_{3}C_{4}\sqrt{m}}{\sqrt{n}} + \frac{C_{4}|h(0)|\sqrt{m}}{\sqrt{n}})
\end{equation}
Note that $\sup_{\mb{x},y,f}|\ell(f(\mb{x}),y)|\le (\sqrt{\mathcal{J}}+C_2)^2$, and plugging Eq.(\ref{eq:R_A}) into Lemma \ref{lem:rc_bd} completes the proof.

\subsubsection{Extensions}
In the above analysis, we consider a simple neural network described in Section \ref{sec:setup}. In this section, we present how to extend the analysis to more complicated cases, such as neural networks with multiple hidden layers, other loss functions and multiple outputs. 

\paragraph{Multiple Hidden Layers}
The analysis can be extended to multiple hidden layers by recursively applying the composition property of Rademacher complexity to the hypothesis set.

We define the hypothesis set $\mathcal{F}^P$ for neural network with $P$ hidden layers in a recursive manner:
\begin{equation}
\begin{array}{lll}
\mathcal{F}^0 &=& \{f^0|f^0(\mb{x})=\mb{w}^0\cdot\mb{x}\}\\
\mathcal{F}^1 &=& \mathcal{F}=\{f^1|f^1(\mb{x})=\sum_{j=1}^{m^0}\mb{w_j}^1h(f_j^0(\mb{x})),\\
&&f_j^0\in\mathcal{F}^0\}\\
\mathcal{F}^{p} &=&\{f^p|f^p(\mb{x})=\sum_{j=1}^{m^{p-1}}\mb{w_j}^ph(f_j^{p-1}(\mb{x})),\\
&&f_j^{p-1}\in\mathcal{F}^{p-1}\}(l=2,\cdots,P)
\end{array}
\end{equation}
where we assume there are $m^p$ units in hidden layer $p$ and $\mb{w_j^p}$ is the connecting weight from the j-th unit in hidden layer $p-1$ to $p$. (we index hidden layers from 0, $\mb{w^0}$ is the connecting weight from input to hidden layer 0). When $P=1$ the above definition recovers the one-hidden-layer case in Section \ref{sec:setup} if we treat $\mb{w^1}$ as $\alpha$. We make similar assumptions as Section \ref{sec:setup}: $h(\cdot)$ is L-Lipschitz, $\|\mb{x}\|_2\le C_1$, $\|\mb{w}^p\|_2\le C_3^p$. We also assume that the pairwise angles of the connecting weights $\rho(\mb{w_j^p}, \mb{w_k^p})$ for $j\neq k$ are lower bounded by $\theta^p$ with probability at least $\tau^p$. Under these assumptions, we have the following result:
\begin{thm}
\label{thm:est_err_gen}
For a neural network with $P$ hidden layers, with probability at least $(1-\delta)\prod_{p=0}^{P-1}\tau^p$
\begin{equation}
\label{eq:est_err_gen}
\begin{array}{lll}
&&L(\hat{f})-L(f^*)\\
&\leq& 8(\sqrt{\mathcal{J}^p}+C_2)(\frac{(2L)^PC_1C_3^0}{\sqrt{n}}\prod_{p=0}^{P-1}\sqrt{m^p}C_3^p\\
&&+\frac{|h(0)|}{\sqrt{n}}\sum_{p=0}^{P-1}(2L)^{P-1-p}\prod_{j=p}^{P-1}\sqrt{m^j}C_3^j)\\
&&+ (\sqrt{\mathcal{J}^p}+C_2)^2\sqrt{\frac{2\log(2/\delta)}{n}}
\end{array}
\end{equation}
where 
\begin{equation}
\begin{array}{lll}
&&\mathcal{J}^0 = C_1^2((m^0-1)\cos\theta^0+1)\\
&&\mathcal{J}^{p} = (C_3^p)^2((m^p-1)\cos\theta^p+1)L^2\mathcal{J}^{p-1}\\
&&+2(C_3^p)^2L|h(0)|\sqrt{m^{p-1}}((m^p-1)\cos\theta^p+1)\sqrt{\mathcal{J}^{p-1}}+\\
&&(C_3^p)^2((m^p-1)\cos\theta^p+1)m^{p-1}h^2(0)(p=1,\cdots,P)
\end{array}
\end{equation}
\end{thm}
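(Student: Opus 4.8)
The plan is to reproduce the proof of Theorem~\ref{thm:est_err}, replacing its two scalar inputs---the Rademacher bound of Lemma~\ref{lem:rc_f} and the bound $\sup_{\mb{x},f}|f(\mb{x})|\le\sqrt{\mathcal{J}}$---by a recursion over the $P$ layers. Start again from Lemma~\ref{lem:rc_bd}: $L(\hat f)-L(f^*)\le 4\mathcal{R}_n(\mathcal{A})+B\sqrt{2\log(2/\delta)/n}$. Since the squared loss $\ell(f(\mb{x}),y)=(f(\mb{x})-y)^2$ is Lipschitz in $f(\mb{x})$ with constant $2(\sup_{\mb{x},f}|f^P(\mb{x})|+C_2)$, the composition property of Rademacher complexity gives $\mathcal{R}_n(\mathcal{A})\le 2(\sqrt{\mathcal{J}^P}+C_2)\,\mathcal{R}_n(\mathcal{F}^P)$ and the envelope $B=(\sqrt{\mathcal{J}^P}+C_2)^2$, provided we establish $\sup_{\mb{x},f}|f^P(\mb{x})|\le\sqrt{\mathcal{J}^P}$. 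So everything reduces to (i) a recursive bound on $\mathcal{R}_n(\mathcal{F}^p)$ and (ii) a recursive bound on $\sup_{\mb{x},f}\|\mb{f}^p(\mb{x})\|_2$, carried out below; the probability $(1-\delta)\prod_{p=0}^{P-1}\tau^p$ arises from intersecting the good event of Lemma~\ref{lem:rc_bd} with the $P$ per-layer angle events, each of probability $\tau^p$.

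For (i) I would peel one layer at a time. Writing $\mathcal{F}^p=\{\,\mb{x}\mapsto\sum_j w_j h(f_j(\mb{x})):\|\mb{w}\|_2\le C_3^p,\ f_j\in\mathcal{F}^{p-1}\,\}$, the manipulation in the proof of Lemma~\ref{lem:rc_f}---pull the unit sum out of the supremum, apply $\ell_1$--$\ell_\infty$ duality, bound $\|\cdot\|_1\le\sqrt{m^{p-1}}\|\cdot\|_2$ and $\|\mb{w}\|_2\le C_3^p$, and note that the maximum over the $m^{p-1}$ units is itself a supremum over the single class $\mathcal{F}^{p-1}$---yields $\mathcal{R}_n(\mathcal{F}^p)\le\tfrac12\sqrt{m^{p-1}}C_3^p\,\mathcal{R}_{||}(h\circ\mathcal{F}^{p-1})$. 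Then, exactly as in Eq.~(\ref{eq:R_h_g}), the shift $h'=h-h(0)$ together with Theorem~12 of~\cite{bartlett2003rademacher} gives $\mathcal{R}_{||}(h\circ\mathcal{F}^{p-1})\le 2L\,\mathcal{R}_{||}(\mathcal{F}^{p-1})+2|h(0)|/\sqrt n$, and the same duality step sends $\mathcal{R}_{||}(\mathcal{F}^{p-1})$ back to $\sqrt{m^{p-2}}C_3^{p-1}\,\mathcal{R}_{||}(h\circ\mathcal{F}^{p-2})$. Unrolling from level $P$ down to $\mathcal{F}^0=\{\mb{w}^0\cdot\mb{x}\}$, whose $\mathcal{R}_{||}$ is at most $2C_1C_3^0/\sqrt n$ by the Cauchy--Schwarz/Jensen computation of Eq.~(\ref{eq:R_g}), the compounding $2L$ factors produce the leading $(2L)^P$ term with the telescoping product $\prod_{p=0}^{P-1}\sqrt{m^p}C_3^p$, while each $2|h(0)|/\sqrt n$ remainder---amplified by all the $2L\sqrt{m^j}C_3^j$ factors above the level at which it was emitted---assembles into the sum $\sum_{p=0}^{P-1}(2L)^{P-1-p}\prod_{j=p}^{P-1}\sqrt{m^j}C_3^j$. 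Feeding this into $4\mathcal{R}_n(\mathcal{A})\le 8(\sqrt{\mathcal{J}^P}+C_2)\mathcal{R}_n(\mathcal{F}^P)$ gives the first two lines of~(\ref{eq:est_err_gen}).

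For (ii) I would prove $\sup_{\mb{x},f}\|\mb{f}^p(\mb{x})\|_2^2\le\mathcal{J}^p$ by induction on $p$, where $\mb{f}^p=(W^p)^{\mathsf T}h(\mb{f}^{p-1})$ stacks the $m^p$ unit outputs of layer $p$ and $W^p$ has their weight vectors as columns (the top layer, with $m^P=1$, then recovers $\sup_{\mb{x},f}|f^P|^2\le\mathcal{J}^P$). The inductive step copies the unlabeled lemma after Lemma~\ref{lem:rc_f}: Cauchy--Schwarz gives $\|\mb{f}^p\|_2^2\le\|W^p\|_{op}^2\,\|h(\mb{f}^{p-1})\|_2^2$; $L$-Lipschitzness expands $\|h(\mb{f}^{p-1})\|_2^2\le L^2\|\mb{f}^{p-1}\|_2^2+2\sqrt{m^{p-1}}L|h(0)|\,\|\mb{f}^{p-1}\|_2+m^{p-1}h^2(0)$; and the layer-$p$ angle assumption, via the eigenvalue identity $\lambda_1(Q)=(m-1)\cos\theta+1$ already used in Section~\ref{sec:thm1_proof}, gives $\|W^p\|_{op}^2\le((m^p-1)\cos\theta^p+1)(C_3^p)^2$. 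Substituting the inductive hypothesis $\|\mb{f}^{p-1}\|_2^2\le\mathcal{J}^{p-1}$ reproduces the stated quadratic recursion, with $\mathcal{J}^0$ obtained from $\|\mb{f}^0\|_2^2=\|(W^0)^{\mathsf T}\mb{x}\|_2^2\le\|W^0\|_{op}^2C_1^2$. Plugging $B=(\sqrt{\mathcal{J}^P}+C_2)^2$ into Lemma~\ref{lem:rc_bd} supplies the last line of~(\ref{eq:est_err_gen}). The main obstacle is the bookkeeping: the $h'=h-h(0)$ shift must be reapplied at every layer and the resulting offsets propagate upward through all higher-layer factors, so lining up the unit-sum-to-supremum collapse, the contraction lemma, and the offset accumulation into the closed forms above---while keeping the $P$ angle events and their probabilities $\tau^p$ straight---is where the care is required; the calculations themselves are routine linear algebra and norm inequalities.
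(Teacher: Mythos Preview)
Your proposal is correct and follows essentially the same route as the paper. The paper proves two auxiliary lemmas---a recursive Rademacher bound for $\mathcal{F}^P$ obtained exactly by your layer-peeling with the $h'=h-h(0)$ shift and Theorem~12 of~\cite{bartlett2003rademacher}, and a recursive bound $\sup|f^P|\le\sqrt{\mathcal{J}^P}$ obtained by your operator-norm/eigenvalue argument on the pre-activations---and then plugs both into Lemma~\ref{lem:rc_bd} via the Lipschitz constant of the squared loss, just as you describe.
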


When $P=1$, Eq.(\ref{eq:est_err_gen}) reduces to the estimation error bound of neural network with one hidden layer. Note that the right hand side is a decreasing function w.r.t $\theta^p$, hence making the hidden units in each hidden layer to be diverse can reduce the estimation error bound of neural networks with multiple hidden layers.

In order to prove Theorem \ref{thm:est_err_gen}, we first bound the Rademacher complexity of the hypothesis set $\mathcal{F}^{P}$:
\begin{lemma}
\label{lem:rc_f_gen}
Let $\mathcal{R}_n(\mathcal{F}^P)$ denote the Rademacher complexity of the hypothesis set $\mathcal{F}^P$, then
\begin{multline}
\mathcal{R}_n(\mathcal{F}^P) \le \frac{(2L)^PC_1C_3^0}{\sqrt{n}}\prod_{p=0}^{P-1}\sqrt{m^p}C_3^p\\
+\frac{|h(0)|}{\sqrt{n}}\sum_{p=0}^{P-1}(2L)^{P-1-p}\prod_{j=p}^{P-1}\sqrt{m^j}C_3^j
\end{multline}
\end{lemma}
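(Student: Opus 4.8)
The plan is to prove Lemma~\ref{lem:rc_f_gen} by induction on the number of hidden layers $P$, re-running the argument behind Lemma~\ref{lem:rc_f} once per layer. For the base case I would take $P=0$: here $\mathcal{F}^0$ is just the linear class $\mathcal{G}=\{g\mid g(\mb{x})=\mb{w}^0\cdot\mb{x},\ \|\mb{w}^0\|_2\le C_3^0\}$, so by the computation in Eq.(\ref{eq:R_g}) (which gives $\mathcal{R}_{||}(g)\le 2C_1C_3^0/\sqrt{n}$, hence $\mathcal{R}_n(\mathcal{F}^0)\le C_1C_3^0/\sqrt{n}$) the claimed bound holds, its product and sum being empty. (Alternatively one can start from $P=1$, where $\mathcal{F}^1$ is the one-hidden-layer class $\mathcal{F}$ of Lemma~\ref{lem:rc_f} after identifying $\mb{w}^1$ with $\bs\alpha$.)

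For the inductive step I would fix $P\ge 1$ and bound $\mathcal{R}_n(\mathcal{F}^P)$ in terms of $\mathcal{R}_n(\mathcal{F}^{P-1})$ in three moves mirroring the proof of Lemma~\ref{lem:rc_f}. First, write $f^P(\mb{x})=\sum_{j=1}^{m^{P-1}}\mb{w}_j^P h(f_j^{P-1}(\mb{x}))$, exchange the order of the $\sigma_i$-sum and the $j$-sum, and apply Hölder's inequality in the dual pair $(\|\cdot\|_1,\|\cdot\|_\infty)$ together with $\|\mb{w}^P\|_1\le\sqrt{m^{P-1}}\,\|\mb{w}^P\|_2\le\sqrt{m^{P-1}}C_3^P$; this reduces $\mathcal{R}_n(\mathcal{F}^P)$ to $\sqrt{m^{P-1}}C_3^P$ times half the two-sided Rademacher complexity of the single-unit class $h\circ\mathcal{F}^{P-1}$. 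Second, decompose $h=h'+h(0)$ with $h'$ being $L$-Lipschitz and $h'(0)=0$, and apply the contraction/shift property (Theorem~12 in \cite{bartlett2003rademacher}) exactly as in Eq.(\ref{eq:R_h_g}) to peel off the nonlinearity, at the cost of a multiplicative factor $2L$ and an additive $2|h(0)|/\sqrt{n}$. Third, since each $\mathcal{F}^{P-1}$ is symmetric (closed under negating its outermost weight vector), its two-sided complexity equals $2\mathcal{R}_n(\mathcal{F}^{P-1})$, to which the induction hypothesis applies. Collecting these yields the one-step recursion
\begin{equation*}
\mathcal{R}_n(\mathcal{F}^p)\ \le\ 2L\sqrt{m^{p-1}}\,C_3^p\,\mathcal{R}_n(\mathcal{F}^{p-1})\ +\ \frac{\sqrt{m^{p-1}}\,C_3^p\,|h(0)|}{\sqrt{n}} .
\end{equation*}

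Finally I would unroll this recursion down to the base case. Iterating the homogeneous part produces the telescoping product $(2L)^P$ times $\prod_p \sqrt{m^p}\,C_3^p$ multiplying $C_1C_3^0/\sqrt{n}$, i.e.\ the first summand of the statement; the $|h(0)|/\sqrt{n}$ shift created at layer $p$ then gets multiplied by the per-layer factors of all layers above it, and these contributions sum to $\tfrac{|h(0)|}{\sqrt{n}}\sum_p (2L)^{P-1-p}\prod_{j=p}^{P-1}\sqrt{m^j}\,C_3^j$, the second summand. I expect the only real difficulty to be organizational rather than conceptual: one must keep the ordinary complexity $\mathcal{R}_n$ (no absolute value) and the two-sided $\mathcal{R}_{||}$ cleanly separated, justify passing between them via symmetry of every intermediate class, and track the index bookkeeping so that the $h(0)$ terms accumulated at each of the $P$ layers, and the powers of $2L$ attached to them, land in the correct summand — getting the product ranges or the exponent of $2L$ off by one is the easiest mistake to make here.
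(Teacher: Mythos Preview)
Your proposal is correct and follows essentially the same approach as the paper. The only organizational difference is that the paper runs the entire recursion in the two-sided complexity $\mathcal{R}_{||}$ (using $\mathcal{R}_{||}(\mathcal{F}^p)\le \sqrt{m^{p-1}}C_3^p\bigl(2L\,\mathcal{R}_{||}(\mathcal{F}^{p-1})+2|h(0)|/\sqrt{n}\bigr)$) and applies $\mathcal{R}_n\le\tfrac12\mathcal{R}_{||}$ only once at the end, whereas you convert back to $\mathcal{R}_n$ at every layer via the symmetry of $\mathcal{F}^{p-1}$; the resulting recursion and final bound are identical.
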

\begin{proof}
Notice that $\mathcal{R}_n(\mathcal{F}^P))\le \frac{1}{2}\mathcal{R}_{||}(\mathcal{F}^P)$:
\begin{equation}
\label{eq:R_rel}
\begin{array}{lll}
&&\mathcal{R}_n (\mathcal{F}^P) \\ &=&\mathbb{E}[\sup_{f\in\mathcal{F}^P}\frac{1}{n}\sum_{i=1}^n\sigma_if(\mb{x_i})]\\
&\le& \mathbb{E}[\sup_{f\in\mathcal{F}^P}|\frac{1}{n}\sum_{i=1}^n\sigma_if(\mb{x_i})|]\\
&=& \frac{1}{2}\mathcal{R}_{||}(\mathcal{F}^P)
\end{array}
\end{equation}
So we can bound $\mathcal{R}_n(\mathcal{F}^P))$ by bounding $\frac{1}{2}\mathcal{R}_{||}(\mathcal{F}^P)$. We bound $\frac{1}{2}\mathcal{R}_{||}(\mathcal{F}^p)$ recursively: $\forall p=1,\cdots,P$, we have
\begin{equation}
\label{eq:R_mult_rec}
\begin{array}{lll}
&&\mathcal{R}_{||}(\mathcal{F}^p)\\
&=&\mathbb{E}[\sup_{f\in\mathcal{F}^p}|\frac{2}{n}\sum_{i=1}^n\sigma_if(\mb{x_i})|]\\
&=&\mathbb{E}[\sup_{f_j\in\mathcal{F}^{p-1}}|\frac{2}{n}\sum_{i=1}^n\sigma_i\sum_{j=1}^{m^{l-1}}\mb{w_j}^lh(f_j(\mb{x_i}))|]\\
&\le& \sqrt{m^{p-1}}C_3^p \mathbb{E}[\sup_{f_j\in\mathcal{F}^{p-1}}|\frac{2}{n}\sum_{i=1}^n\sigma_ih(f_j(\mb{x_i}))|]\\
&\le& \sqrt{m^{p-1}}C_3^p(2L\mathcal{R}_{||}(\mathcal{F}^{p-1})+\frac{2|h(0)|}{\sqrt{n}})
\end{array}
\end{equation}
where the last two steps are similar to the proof of Lemma \ref{lem:rc_f}. Applying the inequality in Eq.(\ref{eq:R_mult_rec}) recursively, and noting from the proof of Lemma \ref{lem:rc_f} that $\mathcal{R}_{||}(\mathcal{F}^0) \le \frac{2C_1C_3^0}{\sqrt{n}}$ we have
\begin{equation}
\begin{array}{lll}
&\mathcal{R}_{||}(\mathcal{F}^P) &\le \frac{2(2L)^PC_1C_3^0}{\sqrt{n}}\prod_{p=0}^{P-1}\sqrt{m^p}C_3^p\\
&&+\frac{2|h(0)|}{\sqrt{n}}\sum_{p=0}^{P-1}(2L)^{P-1-p}\prod_{j=p}^{P-1}\sqrt{m^j}C_3^j
\end{array}
\end{equation}
Plugging into Eq.(\ref{eq:R_rel}) completes the proof.
\end{proof}
In addition, we need the following bound.
\begin{lemma}
With probability at least $\prod_{p=0}^{P-1}\tau^p$, 
$\sup_{\mb{x}, f^P\in\mathcal{F}^p}|f^P(\mb{x})| \le \sqrt{\mathcal{J}^P}$, where
\begin{equation}
\begin{array}{lll}
&&\mathcal{J}^0 = C_1^2((m^0-1)\cos\theta^0+1)\\
&&\mathcal{J}^{p} = (C_3^p)^2((m^p-1)\cos\theta^p+1)L^2\mathcal{J}^{p-1}\\
&&+2(C_3^p)^2L|h(0)|\sqrt{m^{p-1}}((m^p-1)\cos\theta^p+1)\sqrt{\mathcal{J}^{p-1}}\\
&&+(C_3^p)^2((m^p-1)\cos\theta^p+1)m^{p-1}h^2(0)(1,\cdots,P)
\end{array}
\end{equation}
\end{lemma}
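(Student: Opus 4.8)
The plan is to prove the bound by induction on the layer index, establishing the slightly strengthened statement that for every $p$ and every tuple $f_1^p,\dots,f_{m^p}^p\in\mathcal{F}^p$ of layer-$p$ pre-activations one has $\sup_{\mb{x}}\|\bs{f}^p(\mb{x})\|_2^2\le\mathcal{J}^p$, writing $\bs{f}^p(\mb{x})=(f_1^p(\mb{x}),\dots,f_{m^p}^p(\mb{x}))^{\mathsf{T}}$; the claimed bound on $|f^P(\mb{x})|$ then follows since $|f^P(\mb{x})|\le\|\bs{f}^P(\mb{x})\|_2$ (with equality when the output layer has a single unit). Throughout, $W^p$ denotes the matrix whose rows are the incoming weight vectors of the $m^p$ units of layer $p$; these rows have $\ell_2$ norm at most $C_3^p$ and, on an event $E_p$ of probability at least $\tau^p$, pairwise angles at least $\theta^p$.

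For the base case $p=0$ we have $\bs{f}^0(\mb{x})=W^0\mb{x}$, hence $\|\bs{f}^0(\mb{x})\|_2^2\le\|W^0\|_{op}^2\|\mb{x}\|_2^2\le C_1^2\|W^0\|_{op}^2$, and on $E_0$ the operator-norm / eigenvalue computation already carried out in the one-hidden-layer version of this lemma (forming the matrix $Q$ with off-diagonal entries $\cos\theta^0$, for which $\lambda_1(Q)=(m^0-1)\cos\theta^0+1$) gives $\|W^0\|_{op}^2\le((m^0-1)\cos\theta^0+1)(C_3^0)^2$, so $\|\bs{f}^0(\mb{x})\|_2^2\le\mathcal{J}^0$. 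For the inductive step, for $p\ge1$ write $\bs{f}^p(\mb{x})=W^p\bs{\psi}^{p-1}(\mb{x})$ with $\bs{\psi}^{p-1}(\mb{x})=(h(f_1^{p-1}(\mb{x})),\dots,h(f_{m^{p-1}}^{p-1}(\mb{x})))^{\mathsf{T}}$. On $E_p$ the same bound gives $\|W^p\|_{op}^2\le((m^p-1)\cos\theta^p+1)(C_3^p)^2$ (for the output layer $m^p=1$, this degenerates to $\|\mb{w}^P\|_2^2\le(C_3^P)^2$ and no event is needed). For the activation vector, $|h(t)|\le L|t|+|h(0)|$ applied coordinatewise gives $\|\bs{\psi}^{p-1}(\mb{x})\|_2^2\le L^2\|\bs{f}^{p-1}(\mb{x})\|_2^2+2L|h(0)|\,\|\bs{f}^{p-1}(\mb{x})\|_1+m^{p-1}h^2(0)$; applying the inductive hypothesis and $\|\bs{f}^{p-1}(\mb{x})\|_1\le\sqrt{m^{p-1}}\,\|\bs{f}^{p-1}(\mb{x})\|_2\le\sqrt{m^{p-1}\mathcal{J}^{p-1}}$ bounds this by $(L\sqrt{\mathcal{J}^{p-1}}+\sqrt{m^{p-1}}|h(0)|)^2$. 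Multiplying the two estimates yields $\|\bs{f}^p(\mb{x})\|_2^2\le(C_3^p)^2((m^p-1)\cos\theta^p+1)(L\sqrt{\mathcal{J}^{p-1}}+\sqrt{m^{p-1}}|h(0)|)^2$, and expanding the square reproduces exactly the three-term formula defining $\mathcal{J}^p$.

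For the probability bookkeeping, the bound at level $P$ invokes only the events $E_0,\dots,E_{P-1}$ (the output layer contributes the factor $1$ and requires no event), so it holds on $E_0\cap\dots\cap E_{P-1}$, of probability at least $\prod_{p=0}^{P-1}\tau^p$ when the per-layer events are treated as independent (otherwise a union bound gives $1-\sum_{p=0}^{P-1}(1-\tau^p)$). I expect the only points needing care to be the indexing---checking that exactly the hidden layers $0,\dots,P-1$ carry a genuine angle assumption while the output layer's factor degenerates to $1$---and applying the $\ell_1$-$\ell_2$ inequality with the correct dimension $m^{p-1}$ at each stage; expanding the squared binomial into the displayed form of $\mathcal{J}^p$ is purely mechanical.
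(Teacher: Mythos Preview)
Your proposal is correct and follows essentially the same approach as the paper: the paper also works with the vector of layer-$p$ pre-activations (denoted $v^p$ there), bounds $\|v^p\|_2^2\le\|W\|_{op}^2\|h^p\|_2^2$, controls $\|W\|_{op}^2$ via the largest eigenvalue $(m^p-1)\cos\theta^p+1$ of the matrix $Q$ with off-diagonal entries $\cos\theta^p$, bounds $\|h^p\|_2^2$ via the Lipschitz property together with the $\ell_1$--$\ell_2$ inequality in dimension $m^{p-1}$, and then unrolls the recursion. Your explicit phrasing as an induction and your remarks on the degenerate output-layer factor and the probability bookkeeping are minor clarifications but not substantive departures from the paper's argument.
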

\begin{proof}
For a given neural network, we denote the outputs of the p-th hidden layer before applying the activation function as $v^p$:
\begin{equation}
\begin{array}{lll}
v^0 &=& [\mb{w_1^0}^T\mb{x}, \cdots, \mb{w_{m^0}^0}\mb{x}]^T\\
v^p &=& [\sum_{j=1}^{m^{p-1}}\mb{w_{j,1}^p}h(v_j^{p-1}), \cdots,\\
&&\sum_{j=1}^{m^{p-1}}\mb{w_{j,m^p}^p}h(v_j^{p-1})]^T (p=1,\cdots,P)
\end{array}
\end{equation}
where $\mb{w_{j,i}^p}$ is the connecting weight from the $j$-th unit of the hidden layer $p-1$ to the $i$-th unit of the hidden layer $p$.

To facilitate the derivation of bounds, we also denote
\begin{equation}
\mb{w_i^p} = [\mb{w_{1,i}^p}, \cdots, \mb{w_{m^{p-1},i}^p}]^T
\end{equation}
and
\begin{equation}
\mb{h^p} = [h(v_1^{p-1}), \cdots, h(v_{m^{p-1}}^{p-1})]^T
\end{equation}
where $v_i^{p-1}$ is the $i$-th element of $v^{p-1}$.

Using the above notations, we can write $v^p$ as
\begin{equation}
v^p = [\mb{w_1^p}\cdot\mb{h^p}, \cdots, \mb{w_{m^p}^p}\cdot\mb{h^p}]^T
\end{equation}

Hence we can bound the $L_2$ norm of $v^p$ recursively:
\begin{equation}
\begin{array}{lll}
&&\|v^p\|_2^2 = \sum_{i=1}^{m^p}(\mb{w_i^p}\cdot\mb{h^p})^2
\end{array}
\end{equation}
Denote $W = [\mb{w_1^p}, \cdots, \mb{w_{m^p}^p}]$, then
\begin{equation}
\begin{array}{lll}
\label{eq:v_p_recur}
&&\|v^p\|_2^2 \\
&=& \|W^T \mb{h^p}\|_2^2\\
&\le& \|W^T\|_{op}^2 \|h^p\|_2^2\\
&=& \|W\|_{op}^2 \|h^p\|_2^2
\end{array}
\end{equation}
where $\|\cdot\|_{op}$ denotes the operator norm.

We can make use of the lower bound of $\rho(\mb{w_j^p}, \mb{w_k^p})$ for $j\neq k$ to get a bound for $\|W\|_{op}$:
\begin{equation}
\begin{array}{lll}
&&\|W\|_{op}^2\\
&=&\sup_{\|u\|_2=1} \|W u\|_2^2\\
&=& \sup_{\|u\|_2=1} (u^T W^T W u)\\
&=& \sup_{\|u\|_2=2} \sum_{j=1}^{m^p} \sum_{k=1}^{m^p} u_j u_k \mb{w_j^p}\cdot \mb{w_k^p}\\
&\le& \sup_{\|u\|_2=2} \sum_{j=1}^{m^p} \sum_{k=1}^{m^p} \\
&&|u_j| |u_k| |\mb{w_j^p}||\mb{w_k^p}|\cos(\rho(\mb{w_j^p},\mb{w_k^p}))\\
&\le& (C_3^p)^2\sup_{\|u\|_2=2} \sum_{j=1}^{m^p} \sum_{k=1,k\neq j}^{m^p} \\
&&|u_j| |u_k| \cos\theta^p + \sum_{j=1}^{m^p}|u_j|^2\\
&&(\text{with probability at least $\prod_{p=0}^{P-1}\tau^p$})\\
\end{array}
\end{equation}
Define $u'=[|u_1|, \cdots, |u_{m^p}|]^T$, $Q\in \mathbb{R}^{m^p\times m^p}$: $Q_{jk}=\cos\theta^p$ for $j\neq k$ and $Q_{jj}=1$, then $\|u'\|_2 = \|u\|$ and
\begin{equation}
\begin{array}{lll}
&&\|W\|_{op}^2\\
&\le& (C_3^p)^2\sup_{\|u\|_2=2} u'^T Q u'\\
&\le& (C_3^p)^2\sup_{\|u\|_2=2} \lambda_1(Q)\|u'\|_2^2\\
&\le& (C_3^p)^2 \lambda_1(Q)
\end{array}
\end{equation}
where $\lambda_1(Q)$ is the largest eigenvalue of $Q$. By simple linear algebra we can get $\lambda_1(Q) = (m^p-1)\cos\theta^p + 1$, so
\begin{equation}
\label{eq:bd_wop}
\|W\|_{op}^2 \le ((m^p-1)\cos\theta^p + 1)(C_3^p)^2
\end{equation}
Substituting Eq.(\ref{eq:bd_wop}) back to Eq.(\ref{eq:v_p_recur}), we have
\begin{equation}
\begin{array}{l}
\label{eq:v_p_recur_2}
\|v^p\|_2^2 \le (C_3^p)^2((m^p-1)\cos\theta^p + 1) \|h^p\|_2^2
\end{array}
\end{equation}
Then we make use of the Lipschitz-continuous property of $h(t)$ to further bound $\|h^p\|_2^2$:
\begin{equation}
\label{eq:bd_hp}
\begin{array}{lll}
&&\|h^p\|_2^2\\
&=& \sum_{j=1}^{m^{p-1}} h^2(v_j^{p-1}) \\
&\le& \sum_{j=1}^{m^{p-1}} (|h(0)|+L|v_j^{p-1}|)^2 \\
&=& \sum_{j=1}^{m^{p-1}} h^2(0)+L^2(v_j^{p-1})^2 + 2L|h(0)||v_j^{p-1}| \\
&=& m^{p-1}h^2(0)+L^2\|v^{p-1}\|_2^2 + 2L|h(0)|\|v^{p-1}\|_1\\
&\le& m^{p-1}h^2(0)+L^2\|v_j^{p-1}\|_2^2 + 2L|h(0)|\sqrt{m^{p-1}}\|v^{p-1}\|_2\\
\end{array}
\end{equation}
Substituting Eq.(\ref{eq:bd_hp}) to Eq.(\ref{eq:v_p_recur_2}), we have
\begin{multline}
\|v^p\|_2^2 \le (C_3^p)^2((m^p-1)\cos\theta^p+1)L^2\|v_j^{p-1}\|_2^2\\
+2(C_3^p)^2L|h(0)|\sqrt{m^{p-1}}((m^p-1)\cos\theta^p+1)\|v^{p-1}\|_2\\
+(C_3^p)^2((m^p-1)\cos\theta^p+1)m^{p-1}h^2(0)
\end{multline}
And noticing that $\|v_0\|_2^2\le((m^0-1)\cos\theta^0+1)\|\mb{x}\|_2^2\le C_1^2((m^0-1)\cos\theta^0+1)$, we can bound $\|v^p\|$ recursively now.
Denote
\begin{equation}
\begin{array}{lll}
&&\mathcal{J}^0 = C_1^2((m^0-1)\cos\theta^0+1)\\
&&\mathcal{J}^{p} = (C_3^p)^2((m^p-1)\cos\theta^p+1)L^2\mathcal{J}^{p-1}\\
&&+2(C_3^p)^2L|h(0)|\sqrt{m^{p-1}}((m^p-1)\cos\theta^p+1)\sqrt{\mathcal{J}^{p-1}}+\\
&&(C_3^p)^2((m^p-1)\cos\theta^p+1)m^{p-1}h^2(0)(p=1,\cdots,P)
\end{array}
\end{equation}
then $\|v^p\|_2^2 \le \mathcal{J}^{p}$ and $\mathcal{J}^p$ decreases when $\theta^i(i=0,\cdots,p)$ increases.

Now we are ready to bound $\sup_{\mb{x},f^P\in\mathcal{F}^P}|f^P(\mb{x})|$:
\begin{equation}
\begin{array}{lll}
&&\sup_{\mb{x},f^P\in\mathcal{F}^P}|f^P(\mb{x})|\\
&=& \sup_{\mb{x},f^P\in\mathcal{F}^P} |v^P|\\
&\le& \sqrt{\mathcal{J}^P}
\end{array}
\end{equation}
\end{proof}

Given these lemmas, we proceed to prove Theorem \ref{thm:est_err_gen}. 
The Rademacher complexity $\mathcal{R}_n(\mathcal{A})$ of $\mathcal{A}$ is
\begin{equation}
\label{eq:ra}
\begin{array}{l}
\mathcal{R}_n(\mathcal{A})=\mathbb{E}[\textrm{sup}_{f\in \mathcal{F}}\frac{1}{n}\sum_{i=1}^{n}\sigma_i \ell(f(\mb{x_i}),y)]
\end{array}
\end{equation}
$\ell(\cdot,y)$ is Lipschitz continuous with respect to the first argument, and the constant $L$ is $\sup_{\mb{x},y,f}|f(\mb{x})-y|\le2\sup_{\mb{x},y,f}(|f(\mb{x})|+|y|) = 2(\sqrt{\mathcal{J}}+C_2)$. Applying the composition property of Rademacher complexity, we have
\begin{equation}
\mathcal{R}_n(\mathcal{A}) \le 2(\sqrt{\mathcal{J}}+C_2)\mathcal{R}_n(\mathcal{F})
\end{equation}
Using Lemma \ref{lem:rc_f_gen}, we have
\begin{multline}
\label{eq:R_A_gen}
\mathcal{R}_n(\mathcal{A}) \le 2(\sqrt{\mathcal{J}}+C_2)(\frac{(2L)^PC_1C_3^0}{\sqrt{n}}\prod_{p=0}^{P-1}\sqrt{m^p}C_3^p\\
+\frac{|h(0)|}{\sqrt{n}}\sum_{p=0}^{P-1}(2L)^{P-1-p}\prod_{j=p}^{P-1}\sqrt{m^j}C_3^j)
\end{multline}
Note that $\sup_{\mb{x},y,f}|\ell(f(\mb{x}),y)|\le (\sqrt{\mathcal{J}}+C_2)^2$, and plugging Eq.(\ref{eq:R_A_gen}) into Lemma \ref{lem:rc_bd} completes the proof.
\paragraph{Other Loss Functions} Other than regression, a more popular application of neural network is classification. For binary classification, the most widely used loss functions are logistic loss and hinge loss. Estimation error bounds similar to that in Theorem \ref{thm:est_err} can also be derived for these two loss functions.
\begin{lemma}
\label{lem:logistic_ub}
Let the loss function $\ell(f(x),y)=\log(1+\exp(-yf(x)))$ be the logistic loss where $y\in\{-1,1\}$, then with probability at least $(1-\delta)\tau$
\begin{equation}
\begin{array}{lll}
&&L(\hat{f})-L(f^*)\\
&\leq& \frac{4}{1+\exp(-\sqrt{\mathcal{J}})}(2LC_1C_3C_4+C_4|h(0)|)\frac{\sqrt{m}}{\sqrt{n}}\\
&&+ \log (1+\exp(\sqrt{J}))\sqrt{\frac{2\log(2/\delta)}{n}}
\end{array}
\end{equation}
\end{lemma}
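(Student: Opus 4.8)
The plan is to mirror the proof of Theorem~\ref{thm:est_err} almost verbatim, replacing the squared loss by the logistic loss and recomputing only the two loss-dependent quantities: the Lipschitz constant that drives the contraction step and the uniform bound $B$ on the loss. Everything structural --- Lemma~\ref{lem:rc_bd} (Rademacher bound on the excess risk), Lemma~\ref{lem:rc_f} (Rademacher complexity of $\mathcal{F}$), and the high-probability bound $\sup_{\mb{x},f}|f(\mb{x})|\le\sqrt{\mathcal{J}}$ --- carries over unchanged, since none of those statements depends on the choice of loss function.

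First I would condition on the event of probability at least $\tau$ that every pairwise angle $\rho(\mb{w}_i,\mb{w}_j)$ ($i\neq j$) is lower bounded by $\theta$; on this event $\sup_{\mb{x},f}|f(\mb{x})|\le\sqrt{\mathcal{J}}$ holds deterministically, so for all $f\in\mathcal{F}$ and all admissible $\mb{x}$ the argument $yf(\mb{x})$ of the logistic loss lies in $[-\sqrt{\mathcal{J}},\sqrt{\mathcal{J}}]$ (using $|y|=1$). Writing $\phi(z)=\log(1+\exp(-z))$, we have $|\phi'(z)|=\frac{1}{1+\exp(z)}$, which is decreasing; hence on $[-\sqrt{\mathcal{J}},\sqrt{\mathcal{J}}]$ the map $f\mapsto\ell(f(\mb{x}),y)$ is Lipschitz with constant $\frac{1}{1+\exp(-\sqrt{\mathcal{J}})}$, and the loss itself is uniformly bounded by $\phi(-\sqrt{\mathcal{J}})=\log(1+\exp(\sqrt{\mathcal{J}}))$, which is the value of $B$ to plug into Lemma~\ref{lem:rc_bd}.

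Next I would apply the composition (contraction) property of Rademacher complexity exactly as in the proof of Theorem~\ref{thm:est_err}, obtaining $\mathcal{R}_n(\mathcal{A})\le\frac{1}{1+\exp(-\sqrt{\mathcal{J}})}\,\mathcal{R}_n(\mathcal{F})$, and then invoke Lemma~\ref{lem:rc_f} to bound $\mathcal{R}_n(\mathcal{F})\le(2LC_1C_3C_4+C_4|h(0)|)\frac{\sqrt{m}}{\sqrt{n}}$. Substituting this and $B=\log(1+\exp(\sqrt{\mathcal{J}}))$ into Lemma~\ref{lem:rc_bd}, whose conclusion holds with probability at least $1-\delta$, and taking a union bound with the angle event, yields the stated inequality with probability at least $(1-\delta)\tau$.

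The only step requiring care is the contraction argument on a restricted domain: Talagrand's contraction lemma is usually stated for globally Lipschitz functions, whereas $\frac{1}{1+\exp(-\sqrt{\mathcal{J}})}$ is merely the Lipschitz constant of $\phi$ on $[-\sqrt{\mathcal{J}},\sqrt{\mathcal{J}}]$. This is legitimate because, on the conditioning event, the composed functions $\ell(f(\cdot),\cdot)$ with $f\in\mathcal{F}$ only ever see arguments in that interval, so one may replace $\phi$ by any globally $\frac{1}{1+\exp(-\sqrt{\mathcal{J}})}$-Lipschitz extension that agrees with it there, without altering $\mathcal{R}_n(\mathcal{A})$; I would make this truncation explicit before applying the lemma. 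After that the argument is the same bookkeeping as in Theorem~\ref{thm:est_err}.
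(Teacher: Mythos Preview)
Your proposal is correct and follows essentially the same route as the paper: compute the Lipschitz constant $\frac{1}{1+\exp(-\sqrt{\mathcal{J}})}$ of the logistic loss using $|f(\mb{x})|\le\sqrt{\mathcal{J}}$, bound the loss uniformly by $\log(1+\exp(\sqrt{\mathcal{J}}))$, then feed these into the contraction step, Lemma~\ref{lem:rc_f}, and Lemma~\ref{lem:rc_bd} exactly as in Theorem~\ref{thm:est_err}. Your explicit discussion of the restricted-domain contraction (extending $\phi$ to a globally Lipschitz function agreeing on $[-\sqrt{\mathcal{J}},\sqrt{\mathcal{J}}]$) is in fact more careful than the paper, which simply asserts the Lipschitz bound without addressing this point.
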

\begin{proof}
\begin{equation}
|\frac{\partial l(f(x),y)}{\partial f}| = \frac{\exp(-y f(x))}{1+\exp(-yf(x))}=\frac{1}{1+\exp(yf(x))}
\end{equation}
As $|\frac{1}{1+\exp(yf(x))}|\le \frac{1}{1+\exp(-\sup_{f,x} |f(x)|)}=\frac{1}{1+\exp(-\sqrt{\mathcal{J}})}$, we have proved that the Lipschitz constant L of $\ell(\cdot,y)$ can be bounded by $\frac{1}{1+\exp(-\sqrt{\mathcal{J}})}.$

And the loss function $\ell(f(x),y)$ can be bounded by
\begin{equation}
|\ell(f(x),y)|\le \log(1+\exp(\sqrt{J}))
\end{equation}

Similar to the proof of Theorem \ref{thm:est_err}, we can finish the proof by applying the composition property of Rademacher complexity, Lemma \ref{lem:rc_f} and Lemma \ref{lem:rc_bd}.
\end{proof}

\begin{lemma}
\label{lem:angle}
Let $\ell(f(x),y)=\max(0,1-yf(x))$ be the hinge loss where $y\in\{-1,1\}$, then with probability at least $(1-\delta)\tau$
\begin{equation}
\begin{array}{lll}
&&L(\hat{f})-L(f^*)\\
&\leq& 4(2LC_1C_3C_4+C_4|h(0)|)\frac{\sqrt{m}}{\sqrt{n}}\\
&&+ (1+\sqrt{J})\sqrt{\frac{2\log(2/\delta)}{n}}
\end{array}
\end{equation}
\end{lemma}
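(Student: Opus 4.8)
The plan is to mirror the proof of the logistic-loss bound in Lemma~\ref{lem:logistic_ub}, since the hinge loss $\ell(f(x),y)=\max(0,1-yf(x))$ admits the same two ingredients: a Lipschitz constant with respect to its first argument, and a uniform bound on its magnitude. First I would establish that $\ell(\cdot,y)$ is $1$-Lipschitz in $f$: away from the kink the derivative is either $0$ or $-y$, so $|\partial \ell/\partial f|\le 1$ pointwise, and the function is continuous, hence globally $1$-Lipschitz (one can also see this directly since $t\mapsto\max(0,1-t)$ is $1$-Lipschitz and $y\in\{-1,1\}$). Note this is even cleaner than the logistic case, where the Lipschitz constant depended on $\sqrt{\mathcal{J}}$; here it is the absolute constant $1$, which is why the leading term in the bound has the factor $4$ rather than a $\mathcal{J}$-dependent quantity.

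Next I would bound the loss magnitude. Using the lemma that $\sup_{\mb{x},f}|f(\mb{x})|\le\sqrt{\mathcal{J}}$ (which holds with probability at least $\tau$), we get $|\ell(f(x),y)|=|\max(0,1-yf(x))|\le 1+|yf(x)|=1+|f(x)|\le 1+\sqrt{\mathcal{J}}$, so we may take $B=1+\sqrt{\mathcal{J}}$ in Lemma~\ref{lem:rc_bd}. This accounts for the $(1+\sqrt{J})\sqrt{2\log(2/\delta)/n}$ term in the stated bound.

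Then I would assemble the pieces exactly as in the proof of Theorem~\ref{thm:est_err}. By the composition (contraction) property of Rademacher complexity with Lipschitz constant $1$, $\mathcal{R}_n(\mathcal{A})\le \mathcal{R}_n(\mathcal{F})$; applying Lemma~\ref{lem:rc_f} gives $\mathcal{R}_n(\mathcal{A})\le (2LC_1C_3C_4+C_4|h(0)|)\sqrt{m}/\sqrt{n}$. Plugging this and $B=1+\sqrt{\mathcal{J}}$ into Lemma~\ref{lem:rc_bd} yields $L(\hat f)-L(f^*)\le 4(2LC_1C_3C_4+C_4|h(0)|)\sqrt{m}/\sqrt{n}+(1+\sqrt{J})\sqrt{2\log(2/\delta)/n}$. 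The probability $(1-\delta)\tau$ comes from intersecting the event in Lemma~\ref{lem:rc_bd} (probability $1-\delta$) with the event that the pairwise-angle lower bound $\theta$ holds (probability $\tau$), which is what makes the $\sqrt{\mathcal{J}}$ bound on $|f|$ valid.

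There is no real obstacle here — the argument is a routine specialization of the regression proof. The only point requiring a moment's care is the Lipschitz claim at the non-differentiable point $yf(x)=1$: one should note that $\max(0,1-t)$ is a maximum of two $1$-Lipschitz affine functions and is therefore itself $1$-Lipschitz, so the contraction lemma (Theorem~12 of \cite{bartlett2003rademacher}) applies without modification. Everything else is bookkeeping identical to what was already done for the squared and logistic losses.
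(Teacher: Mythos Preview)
Your proposal is correct and follows essentially the same approach as the paper's own proof: the paper simply notes that $\ell(\cdot,y)$ is $1$-Lipschitz, bounds $|\ell(f(x),y)|\le 1+\sqrt{\mathcal{J}}$, and then invokes the same chain of Lemma~\ref{lem:rc_f} and Lemma~\ref{lem:rc_bd} used for the logistic case. Your write-up is in fact a faithful expansion of the paper's terse two-line proof.
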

\begin{proof}
Given $y$, $\ell(\cdot,y)$ is Lipschitz with constant 1. And the loss function can be bounded by
\begin{equation}
|\ell(f(x),y)|\le 1+\sqrt{J}
\end{equation}

The proof can be completed using similar proof of Lemma \ref{lem:logistic_ub}.
\end{proof}


\paragraph{Multiple Outputs} The analysis can be also extended to neural networks with multiple outputs, provided the loss function factorizes over the dimensions of the output vector. Let $\mb{y}\in \mathrm{R}^K$ denote the target output vector, $\mb{x}$ be the input feature vector and $\ell(f(\mb{x}),\mb{y})$ be the loss function. If $\ell(f(\mb{x}),\mb{y})$ factorizes over $k$, i.e., $\ell(f(\mb{x}),\mb{y})=\sum_{k=1}^{K}\ell'(f(\mb{x})_k,y_k)$, then we can perform the analysis for each $\ell'(f(\mb{x})_k,y_k)$ as that in Section \ref{sec:thm1_proof} separately and sums the estimation error bounds up to get the error bound for $\ell(f(\mb{x}),\mb{y})$. Here we present two examples. For multivariate regression, the loss function $\ell(f(\mb{x}),\mb{y})$ is a squared loss: $\ell(f(\mb{x}),\mb{y})=\|f(\mb{x})-\mb{y}\|_2^2$, where $f(\cdot)$ is the prediction function. This squared loss can be factorized as $\|f(\mb{x})-\mb{y}\|_2^2=\sum_{k=1}^{K}(f(\mb{x})_k-y_k)^2$. We can obtain an estimation error bound for each $(f(\mb{x})_k-y_k)^2$ according to Theorem \ref{thm:est_err}, then sum these bounds together to get the bound for $\|f(\mb{x})-\mb{y}\|_2^2$. 

For multiclass classification, the commonly used loss function is cross-entropy loss: $\ell(f(x),\mb{y})=-\sum_{k=1}^{K}y_k\log a_k$, where $a_k=\frac{\exp(f(\mb{x})_k)}{\sum_{j=1}^{K}\exp(f(\mb{x})_j)}$. We can also derive error bounds similar to that in Theorem \ref{thm:est_err} by using the composition property of Rademacher complexity. First we need to find the Lipschitz constant:
\begin{lemma}
\label{lem:cross_ent}
Let $\ell(\mb{x},\mb{y},f)$ be the cross-entropy loss, then for any $f$, $f'$
\begin{multline}
|\ell(f(x),y)-\ell(f'(x),y)|\le\\ \frac{K-1}{K-1+\exp(-2\sqrt{\mathcal{J}})}\sum_{k=1}^K|f(x)_k-f'(x)_k|
\end{multline}
\end{lemma}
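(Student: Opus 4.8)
The plan is to regard the cross-entropy loss as a function of the logit vector $\mathbf{z}=f(\mathbf{x})\in\mathbb{R}^K$, bound the $\ell_\infty$-norm of its gradient, and convert this into the stated estimate by integrating along the segment joining $f(\mathbf{x})$ and $f'(\mathbf{x})$. Writing $\ell(\mathbf{z},\mathbf{y})=-\sum_k y_k\log a_k$ with $a_k=\exp(z_k)/\sum_j\exp(z_j)$, and using that $\mathbf{y}$ is a one-hot label (so $\sum_k y_k=1$) with true class $k^\star$, the standard softmax identity gives $\partial\ell/\partial z_m=a_m-y_m$, hence $\|\nabla_{\mathbf{z}}\ell\|_\infty=\max_m|a_m-y_m|$. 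Since $|a_{k^\star}-1|=1-a_{k^\star}$ and, for $m\ne k^\star$, $|a_m-0|=a_m\le\sum_{j\ne k^\star}a_j=1-a_{k^\star}$, this maximum equals $1-a_{k^\star}$.

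The second step is to lower-bound $a_{k^\star}$ using the componentwise logit bound $|f(\mathbf{x})_k|\le\sqrt{\mathcal{J}}$, which is the multiple-output analogue of the earlier bound on $\sup_{\mathbf{x},f}|f(\mathbf{x})|$ applied to each output coordinate. From $z_j\le\sqrt{\mathcal{J}}$ for the $K-1$ competing logits we get $\sum_{j\ne k^\star}e^{z_j}\le(K-1)e^{\sqrt{\mathcal{J}}}$, so $a_{k^\star}\ge 1/(1+(K-1)e^{\sqrt{\mathcal{J}}-z_{k^\star}})$; then $z_{k^\star}\ge-\sqrt{\mathcal{J}}$ gives $a_{k^\star}\ge 1/(1+(K-1)e^{2\sqrt{\mathcal{J}}})$. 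Hence $1-a_{k^\star}\le(K-1)e^{2\sqrt{\mathcal{J}}}/(1+(K-1)e^{2\sqrt{\mathcal{J}}})=(K-1)/(K-1+\exp(-2\sqrt{\mathcal{J}}))$, which is precisely the target constant.

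Finally, since the box $[-\sqrt{\mathcal{J}},\sqrt{\mathcal{J}}]^K$ is convex, every point of the segment $\{f'(\mathbf{x})+t(f(\mathbf{x})-f'(\mathbf{x})):t\in[0,1]\}$ has all coordinates in $[-\sqrt{\mathcal{J}},\sqrt{\mathcal{J}}]$, so the gradient bound holds uniformly along it. Applying the fundamental theorem of calculus to $t\mapsto\ell(f'(\mathbf{x})+t(f(\mathbf{x})-f'(\mathbf{x})),\mathbf{y})$ together with the duality of the $\ell_1$ and $\ell_\infty$ norms yields $|\ell(f(\mathbf{x}),\mathbf{y})-\ell(f'(\mathbf{x}),\mathbf{y})|\le(\sup_t\|\nabla_{\mathbf{z}}\ell\|_\infty)\,\|f(\mathbf{x})-f'(\mathbf{x})\|_1\le\frac{K-1}{K-1+\exp(-2\sqrt{\mathcal{J}})}\sum_{k=1}^K|f(\mathbf{x})_k-f'(\mathbf{x})_k|$. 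I do not anticipate a genuine obstacle; the only points needing care are deriving the softmax gradient identity correctly and noting that the logit box bound — and therefore the gradient bound — persists along the whole interpolating segment rather than merely at its endpoints.
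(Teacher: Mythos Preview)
Your proposal is correct and follows essentially the same approach as the paper: bound $\|\nabla_{\mathbf{z}}\ell\|_\infty$ via the softmax identity $\partial\ell/\partial z_m=a_m-y_m$ together with the logit bound $|z_k|\le\sqrt{\mathcal{J}}$, and then convert this into an $\ell_1$-Lipschitz estimate by the mean value theorem / integration along the segment. Your write-up is in fact a bit cleaner than the paper's---your gradient formula is the standard correct one (the paper's displayed partial derivatives contain some algebraic slips, though its final bound is right), and you explicitly note that the gradient bound persists along the entire interpolating segment, a point the paper's mean value argument leaves implicit.
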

\begin{proof}
Note that $\mb{y}$ is a 1-of-K coding vector where exactly one element is 1 and all others are 0. Without loss of generality, we assume $y_{k'}=1$ and $y_{k}=0$ for $k\neq k'$. Then
 \begin{equation}
 \ell(f(x),\mb{y})= -\log\frac{\exp(f(\mb{x})_{k'})}{\sum_{j=1}^{K}\exp(f(\mb{x})_{j})}
 \end{equation}
 
Hence for $k\neq k'$ we have
\begin{equation}
\begin{array}{lll}
&&|\frac{\partial l(f(x), y)}{\partial f(x)_{k}}|\\
&=& \frac{1}{1+\sum_{j\neq k'}\exp(f(x)_j)}\\
&\leq& \frac{1}{1+(K-1)\exp(-2\sqrt{\mathcal{J}})}
\end{array}
\end{equation}
and for $k'$ we have
\begin{equation}
\begin{array}{lll}
&&|\frac{\partial l(f(x), y)}{\partial f(x)_{k'}}|\\
&=& \frac{\sum_{j\neq k'}\exp(f(x)_j)}{1+\sum_{j\neq k'}\exp(f(x)_j)}\\
&\leq& \frac{K-1}{K-1+\exp(-2\sqrt{\mathcal{J}})}
\end{array}
\end{equation}
As $\frac{K-1}{K-1+\exp(-2\sqrt{\mathcal{J}})}\ge \frac{1}{1+(K-1)\exp(-2\sqrt{\mathcal{J}})}$, we have proved that for any $k$, $|\frac{\partial l(f(x), y)}{\partial f(x)_{k}}|\leq \frac{K-1}{K-1+\exp(-2\sqrt{\mathcal{J}})}$. Therefore
\begin{equation}
\|\nabla_{f(x)} \ell(f(x),y)\|_\infty \le \frac{K-1}{K-1+\exp(-2\sqrt{\mathcal{J}})}
\end{equation}

Using mean value theorem, for any $f$, $f'$, $\exists \xi$ such that
\begin{equation}
\begin{array}{lll}
&&|\ell(f(x),y)-\ell(f'(x),y)|\\
&=& \nabla_{\xi} \ell(\xi,y) \cdot (f(x)-f'(x))\\
&\le& \|\nabla_{f(x)} \ell(f(x),y)\|_\infty \|f(x)-f'(x)\|_1\\
&\le& \frac{K-1}{K-1+\exp(-2\sqrt{\mathcal{J}})}\sum_{k=1}^K|f(x)_k-f'(x)_k|
\end{array}
\end{equation}
\end{proof}
With Lemma \ref{lem:cross_ent}, we can get the Rademacher complexity of cross entropy loss by performing the Rademacher complexity analysis for each $f(x)_k$ as that in Section \ref{sec:thm1_proof} separately, and multiplying the sum of them by $\frac{K-1}{K-1+\exp(-2\sqrt{\mathcal{J}})}$ to get the Rademacher complexity of $\ell(f(x),y)$. And as the loss function can be bounded by
\begin{equation}
|\ell(f(x),y)|\le \log (1+(K-1)\exp(2\sqrt{\mathcal{J}}))
\end{equation}
we can use similar proof techniques as in Theorem \ref{thm:est_err} to get the estimation error bound.

\subsection{Approximation Error}

Now we proceed to investigate how the diversity of weight vectors affects the approximation error bound. For the ease of analysis, following \cite{barron1993universal}, we assume the target function $g$ belongs to a function class with smoothness expressed in the first moment of its Fourier representation: we define function class $\Gamma_C$ as the set of functions $g$ satisfying
\begin{equation}
\int_{\|x\|_2\le C_1}|w||\tilde{g}(w)|dw\le C
\end{equation}
where $\tilde{g}(w)$ is the Fourier representation of $g(x)$ and we assume $\|x\|_2 \le C_1$ throughout this paper. We use function $f$ in $\mathcal{F} = \{f|f(x)=\sum_{j=1}^m \alpha_j h(w_j^T x)\}$ which is the NN function class defined in Section \ref{sec:setup}, to approximate $g\in \Gamma_C$. Recall the following conditions of $\mathcal{F}$:
\begin{align}
&\forall j \in \{1,\cdots,m\}, \|w_j\|_2\le C_3\\
&\|\alpha\|_2\le C_4\\
&\forall j\neq k, \rho(w_j,w_k)\ge\theta (\text{with probability at least $\tau$})
\end{align}
where the activation function $h(t)$ is the sigmoid function and we assume $\|x\|_2\le C_1$.
The following theorem states the approximation error.
\begin{thm}
\label{thm:appro2}
Given $C>0$, for every function $g\in \Gamma_C$ with $g(0)=0$, for any measure $P$, if 
\begin{align}
&C_1C_3\ge 1\\
&C_4 \ge 2\sqrt{m} C\\
&m\le2(\lfloor\frac{\frac{\pi}{2}-\theta}{\theta}\rfloor+1)
\end{align}
then with probability at least $\tau$, there is a function $f \in \mathcal{F}$ such that
\begin{equation}
\label{eq:app_eb}
\|g - f\|_L \le 2C(\frac{1}{\sqrt{n}} + \frac{1+ 2\ln C_1C_3}{C_1C_3}) + 4mCC_1C_3\sin(\frac{\theta'}{2})
\end{equation}
where $\|f\|_{L} = \sqrt{\int_{x}f^2(x)dP(x)}$, $\theta'=\min(3m\theta, \pi)$.
\end{thm}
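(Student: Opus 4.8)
The statement to be proved (Theorem~\ref{thm:appro2}) differs from a classical universal‑approximation result only through the imposed angular separation $\rho(w_j,w_k)\ge\theta$. The plan is therefore two–stage: first use a Barron‑type construction to obtain an \emph{unconstrained} one–hidden–layer sigmoid network $\tilde f$ that already achieves the first two terms of \eqref{eq:app_eb}, and then \emph{perturb} the weight directions of $\tilde f$ so that they become pairwise $\theta$‑separated, charging the resulting discrepancy to the Lipschitz continuity of $h$; this will produce the extra additive term $4mCC_1C_3\sin(\theta'/2)$.

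\textbf{Stage 1 (Barron's theorem).} Since $g\in\Gamma_C$, $g(0)=0$ and $\|x\|_2\le C_1$, the argument of \cite{barron1993universal} (Fourier representation of $g$, Maurey/Jones sampling of $m$ ridge atoms, then replacement of the ideal step functions by sigmoids) yields output weights $\beta_1,\dots,\beta_m$ and unit vectors $v_1,\dots,v_m$ — with thresholds absorbed into an auxiliary constant input coordinate, at the cost of a harmless change in the constants — such that $\tilde f(x)=\sum_{j=1}^m\beta_j\,h(C_3\,v_j^{\mathsf T}x)$ satisfies $\|g-\tilde f\|_L$ bounded by the first two terms of \eqref{eq:app_eb}. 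Here the common sharpness of the sigmoids is taken to equal $C_3$, which is why $C_1C_3\ge1$ is assumed, and the $\ln C_1C_3$ factor is exactly the cost of the step$\to$sigmoid replacement; the sampling step gives $|\beta_j|\le 2C$, hence $\|\beta\|_2\le2\sqrt m\,C\le C_4$, which is why $C_4\ge2\sqrt m\,C$ is assumed.

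\textbf{Stage 2 (diversifying the directions).} The $v_j$ produced above may be arbitrarily clustered, so the key step is a purely geometric lemma: if $m\le2(\lfloor\frac{\pi/2-\theta}{\theta}\rfloor+1)$, then there exist unit vectors $u_1,\dots,u_m$ with $\rho(u_j,u_k)\ge\theta$ for all $j\ne k$ and $\angle(u_j,v_j)\le\theta'=\min(3m\theta,\pi)$ for every $j$. I would establish this greedily: process $v_1,v_2,\dots$ in turn; given $u_1,\dots,u_{j-1}$ already pairwise $\theta$‑separated, the admissible locations for $u_j$ exclude the at most $2(j-1)\le 2(m-1)$ spherical caps of angular radius $\theta$ centered at $\pm u_1,\dots,\pm u_{j-1}$, and an arc–length estimate (restricting to a suitable $2$‑plane through $v_j$) or a volume counting estimate shows these caps cannot swallow the whole $\theta'$‑neighbourhood of $v_j$, so a valid $u_j$ exists. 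The admissibility condition on $m$ is precisely the assertion that $m$ lines pairwise at acute angle $\ge\theta$ can coexist at all — its right–hand side is $\asymp\pi/\theta$, the packing number of such lines in a plane — which is what keeps the greedy step from ever getting stuck; the factor $3$, together with the cap at $\pi$, supplies the slack needed for the lines‑versus‑rays doubling and for the boundary case in which the previously chosen $u_i$ are adversarially massed around $v_j$. \emph{This geometric lemma is the main obstacle of the proof}; Stages~1 and~3 are, respectively, a citation and a routine estimate.

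\textbf{Stage 3 (perturbation cost) and conclusion.} Put $w_j:=C_3u_j$ and $\alpha_j:=\beta_j$; then $\|w_j\|_2=C_3$, $\|\alpha\|_2=\|\beta\|_2\le C_4$, and $\rho(w_j,w_k)\ge\theta$, so $f(x)=\sum_j\alpha_j\,h(w_j^{\mathsf T}x)\in\mathcal F$, and the separation event for the $w_j$ is exactly the probability‑$\tau$ event invoked throughout, which is why the final conclusion holds with probability at least $\tau$. Since $h$ is $L$‑Lipschitz with $L\le1$ and $\|u_j-v_j\|_2=2\sin(\angle(u_j,v_j)/2)\le2\sin(\theta'/2)$, for every $x$ with $\|x\|_2\le C_1$ one has $|f(x)-\tilde f(x)|\le\sum_j|\alpha_j|\,L\,C_3\,\|u_j-v_j\|_2\,\|x\|_2\le\sqrt m\,\|\alpha\|_2\cdot C_3C_1\cdot 2\sin(\tfrac{\theta'}{2})\le4mCC_1C_3\sin(\tfrac{\theta'}{2})$, using $\sum_j|\alpha_j|\le\sqrt m\,\|\alpha\|_2\le\sqrt m\,C_4$ and $C_4\ge2\sqrt m\,C$. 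Taking $\|\cdot\|_L$ norms and combining with Stage~1 via the triangle inequality $\|g-f\|_L\le\|g-\tilde f\|_L+\|\tilde f-f\|_L$ yields \eqref{eq:app_eb}.
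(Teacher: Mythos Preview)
Your three-stage decomposition --- Barron's approximation (Lemma~\ref{lem:appro2} in the paper), a geometric diversification lemma (Lemma~\ref{lem:theta_appro}), and a Lipschitz perturbation bound (Lemma~\ref{lem:f_prime_f_bound}), combined by the triangle inequality --- is exactly the paper's argument, and your Stage~3 estimate matches the paper's computation. The only noteworthy difference is in how the geometric lemma is proved: you sketch a direct greedy/arc-length argument, whereas the paper first builds an explicit angular lattice $\{\theta_i=\mathrm{sgn}(i)(\theta/2+(|i|-1)\theta)\}$ of $2(k+1)$ directions with pairwise $\rho\ge\theta$, shows every direction lies within $\tfrac32\theta$ of some lattice point, and then greedily assigns each $w_j'$ to a distinct lattice element with cumulative angular cost $\le(j+\tfrac12)\theta<3m\theta$; the lattice makes the counting transparent and is what drives the constant $3$ and the bound $m\le2(\lfloor(\pi/2-\theta)/\theta\rfloor+1)$.
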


Note that the approximation error bound in Eq.(\ref{eq:app_eb}) is an increasing function of $\theta$. Hence increasing the diversity of hidden units would hurt the approximation capability of neural networks.

\subsection{Proof}
Before proving Theorem \ref{thm:appro2}, we need the following lemma:
\begin{lemma}
\label{lem:theta_sum_bound}
For any three nonzero vectors $u_1$, $u_2$, $u_3$, let $\theta_{12} = \arccos (\frac{u_1 \cdot u_2}{\|u_1\|_2 \|u_2\|_2})$, $\theta_{23} = \arccos (\frac{u_2 \cdot u_3}{\|u_2\|_2 \|u_3\|_2})$, $\theta_{13} = \arccos (\frac{u_1 \cdot u_3}{\|u_1\|_2 \|u_3\|_2})$. We have $\theta_{13}\le\theta_{12}+\theta_{23}$.
\end{lemma}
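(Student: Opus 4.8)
The plan is to recognize the claimed inequality $\theta_{13}\le\theta_{12}+\theta_{23}$ as the triangle inequality for great-circle (geodesic) distance on the unit sphere, and to prove it by a short inner-product argument. First I would note that all three angles are unchanged if each $u_i$ is rescaled by a positive scalar, so without loss of generality I may take $u_1,u_2,u_3$ to be unit vectors; then $\theta_{ij}=\arccos(u_i\cdot u_j)\in[0,\pi]$.

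Next I would separate out the trivial regime: if $\theta_{12}+\theta_{23}\ge\pi$ the inequality holds at once because $\theta_{13}\le\pi$ always. So assume $\theta_{12}+\theta_{23}<\pi$; in this regime both $\theta_{13}$ and $\theta_{12}+\theta_{23}$ lie in $[0,\pi]$, where $\cos$ is strictly decreasing, so it is enough to show $\cos\theta_{13}\ge\cos(\theta_{12}+\theta_{23})$.

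The key step is to decompose $u_1$ and $u_3$ along and orthogonal to $u_2$. If $\sin\theta_{12}=0$ then (given $\theta_{12}<\pi$) $\theta_{12}=0$, so $u_1=u_2$ and $\theta_{13}=\theta_{23}$, giving the claim; similarly if $\sin\theta_{23}=0$. Otherwise write $u_1=\cos\theta_{12}\,u_2+\sin\theta_{12}\,p$ and $u_3=\cos\theta_{23}\,u_2+\sin\theta_{23}\,q$ with $p,q$ unit vectors orthogonal to $u_2$; then
\begin{align*}
\cos\theta_{13}&=u_1\cdot u_3=\cos\theta_{12}\cos\theta_{23}+\sin\theta_{12}\sin\theta_{23}\,(p\cdot q)\\
&\ge\cos\theta_{12}\cos\theta_{23}-\sin\theta_{12}\sin\theta_{23}=\cos(\theta_{12}+\theta_{23}),
\end{align*}
using $p\cdot q\ge-1$, $\sin\theta_{12},\sin\theta_{23}\ge0$, and the cosine addition formula. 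Monotonicity of $\arccos$ then gives $\theta_{13}\le\theta_{12}+\theta_{23}$.

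I do not anticipate a real obstacle; the only points requiring care are the bookkeeping of the degenerate cases ($\theta_{12}$ or $\theta_{23}$ equal to $0$ or $\pi$, where the orthogonal decomposition breaks down) and the fact that $\cos$-monotonicity may only be invoked once both $\theta_{13}$ and $\theta_{12}+\theta_{23}$ are known to lie in $[0,\pi]$ — which is precisely why the case $\theta_{12}+\theta_{23}\ge\pi$ is handled up front.
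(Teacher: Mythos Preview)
Your proposal is correct and follows essentially the same approach as the paper: normalize to unit vectors, decompose $u_1$ and $u_3$ into components along and orthogonal to $u_2$, bound the inner product of the orthogonal parts by $-\sin\theta_{12}\sin\theta_{23}$ to obtain $\cos\theta_{13}\ge\cos(\theta_{12}+\theta_{23})$, and then invoke monotonicity of $\arccos$ on $[0,\pi]$ with the case $\theta_{12}+\theta_{23}\ge\pi$ handled separately. Your version is in fact slightly more careful than the paper's in treating the degenerate cases $\sin\theta_{12}=0$ or $\sin\theta_{23}=0$.
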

\begin{proof}
Without loss of generality, assume $\|u_1\|_2=\|u_2\|_2=\|u_3\|_2=1$. Decompose $u_1$ as $u_1=u_{1//}+u_{1\perp}$ where $u_{1//} = c_{12} u_2$ for some $c_{12}\in\mathbb{R}$ and $u_{1\perp}\perp u_{2}$. As $u_{1}\cdot u_2 = \cos\theta_{12}$, we have $c_{12}=\cos\theta_{12}$ and $\|u_{1\perp}\|_2=\sin\theta_{12}$. 

Similarly, decompose $u_3$ as $u_3=u_{3//}+u_{3\perp}$ where $u_{3//} = c_{32} u_2$ for some $c_{32}\in\mathbb{R}$ and $u_{3\perp}\perp u_{2}$. We have $c_{23}=\cos\theta_{23}$ and $\|u_{3\perp}\|_2=\sin\theta_{23}$.

So we have
\begin{equation}
\begin{array}{lll}
&&\cos\theta_{13}\\
&=&u_1\cdot u_3\\
&=& (u_{1//}+u_{1\perp})\cdot (u_{3//}+u_{3\perp})\\
&=& u_{1//}\cdot u_{3//} + u_{1\perp}\cdot u_{3\perp}\\
&=& \cos\theta_{12}\cos\theta_{23}+ u_{1\perp}\cdot u_{3\perp}\\
&\ge& \cos\theta_{12}\cos\theta_{23} - \sin\theta_{12}\sim\theta_{23}\\
&=&\cos(\theta_{12}+\theta_{23})
\end{array}
\end{equation}
If $\theta_{12}+\theta_{23}\le\pi$, $\arccos (\cos(\theta_{12}+\theta_{23}))=\theta_{12}+\theta_{23}$. As $\arccos(\cdot)$ is monotonously decreasing, we have $\theta_{13}\le\theta_{12}+\theta_{23}$. Otherwise, $\theta_{13}\le\pi\le\theta_{12}+\theta_{23}$.
\end{proof}
In order to approximate the function class $\Gamma_C$, we first remove the constraints $\rho(w_j,w_k)\ge\theta$ and obtain an approximation error:
\begin{lemma}
\label{lem:appro2}
Let $\mathcal{F'} = \{f|f(x)=\sum_{j=1}^m \alpha_j h(w_j^T x)\}$ be the function class satisfying the following constraints:
\begin{itemize}
\item $|\alpha_j| \le 2C$
\item $\|w_j\|_2 \le C_3$
\end{itemize}
Then for every $g\in \Gamma_C$ with $g(0)=0$, $\exists f' \in \mathcal{F'}$ such that
\begin{equation}
\|g(x) - f'(x)\|_L \le 2C(\frac{1}{\sqrt{n}} + \frac{1+ 2\ln C_1C_3}{C_1C_3})
\end{equation}
\end{lemma}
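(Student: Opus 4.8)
The plan is to follow Barron's universal approximation argument \cite{barron1993universal}, keeping track of the dependence on $C$, $C_1$, $C_3$. First I would use the Fourier representation of $g\in\Gamma_C$: writing $\tilde g(w)=e^{ib(w)}|\tilde g(w)|$ and using that $g$ is real with $g(0)=0$, one obtains $g(x)=\int\big(\cos(w^{\mathsf T}x+b(w))-\cos b(w)\big)|\tilde g(w)|\,dw$. Let $C_g=\int\|w\|_2|\tilde g(w)|\,dw$, which is controlled by $C$ through the definition of $\Gamma_C$. Then $g(x)=\int g_w(x)\,d\Lambda(w)$ with $d\Lambda(w)=C_g^{-1}\|w\|_2|\tilde g(w)|\,dw$ a probability measure and $g_w(x)=\frac{C_g}{\|w\|_2}\big(\cos(w^{\mathsf T}x+b(w))-\cos b(w)\big)$ a single shifted-cosine ridge function vanishing at the origin. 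Thus $g$ is a continuous convex combination of the $g_w$'s.

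Second, on the bounded domain $\{\|x\|_2\le C_1\}$ I would replace each cosine ridge function by sigmoidal ridge functions meeting the constraints of $\mathcal F'$. Viewing $g_w$ as a function of the scalar $s=w^{\mathsf T}x$, which ranges over an interval of length at most $2C_1\|w\|_2$, a shifted cosine has bounded variation there, so the ``sweep a threshold across the interval'' representation writes it as a convex combination of step ridge functions; each step is then approximated by a scaled sigmoid $\alpha\,h(v^{\mathsf T}x)$ with $|\alpha|\le 2C$ and $\|v\|_2\le C_3$ (the factor $2$ absorbing $|\cos a-\cos b|\le 2$ and the normalization). Accumulating the step-versus-sigmoid error over the construction --- the $1/(C_1C_3)$ from the width of one sigmoid transition relative to the domain and the $\ln(C_1C_3)$ from the number of pieces, under the standing assumption $C_1C_3\ge1$ --- shows that $g$ lies within $L^2(P)$-distance $\varepsilon_0:=2C\cdot\frac{1+2\ln C_1C_3}{C_1C_3}$ of $\overline{\mathrm{conv}}\,G$, where $G=\{x\mapsto\alpha\,h(v^{\mathsf T}x):|\alpha|\le2C,\ \|v\|_2\le C_3\}$. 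Since $0\le h\le1$, every element of $G$ is bounded in magnitude by $2C$, hence has $\|\cdot\|_L\le2C$ for every probability measure $P$.

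Third, I would apply the Maurey--Jones--Barron lemma: if $g$ is within $\varepsilon_0$ of $\overline{\mathrm{conv}}\,G$ and $\sup_{g'\in G}\|g'\|_L\le b$, then for any $N$ there is an $N$-term convex combination of elements of $G$, i.e. a function of the form $\sum_{j}c_j h(v_j^{\mathsf T}x)$ with $|c_j|\le 2C$ and $\|v_j\|_2\le C_3$, within $\|\cdot\|_L$-distance $\varepsilon_0+b/\sqrt N$ of $g$ (by the triangle inequality through the point of $\overline{\mathrm{conv}}\,G$ that MJB is applied to). Padding with zero-coefficient terms if $N<m$, this function lies in $\mathcal F'$. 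Taking $b=2C$ and $N$ the number of units used yields $\|g-f'\|_L\le 2C\big(\tfrac{1}{\sqrt N}+\tfrac{1+2\ln C_1C_3}{C_1C_3}\big)$, which is the claimed bound, with $n$ denoting this number of averaged hidden units.

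The step I expect to be the main obstacle is the sigmoidal reduction in the second paragraph: Barron's original construction lets the sigmoid steepness (i.e. $\|v\|$) grow without bound so that steps are matched arbitrarily well, whereas here $\|v_j\|_2\le C_3$ caps the attainable steepness. One therefore has to split the Fourier integral into a low-frequency part $\{\|w\|_2\le C_3\}$, represented by bounded-norm sigmoid combinations with the clean $\frac{1+2\ln C_1C_3}{C_1C_3}$ rate, and a high-frequency tail $\{\|w\|_2> C_3\}$, whose total mass is at most $C_g/C_3$ by Markov's inequality applied to $\Lambda$ and therefore contributes only an $O(C/(C_1C_3))$ error when $C_1\ge1$; one also must verify these estimates are uniform in $w$ so that they survive averaging against $\Lambda$. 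A minor bookkeeping point is that $\mathcal F'$ carries no explicit bias inside $h$; the shifts $b(w)$ and the thresholds of the step functions are absorbed by the standard augmentation $x\mapsto(x,1)$, which only rescales $C_1$ and $C_3$ by constants.
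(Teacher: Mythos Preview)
Your proposal is correct and follows essentially the same route as the paper: the paper's own proof is nothing more than a direct citation of Theorem~3 in \cite{barron1993universal} (noting that Barron's parameter $\tau$ corresponds to $C_1C_3$ here and that the bias is absorbed by augmenting $x$ with a constant coordinate), and your sketch is a faithful unpacking of Barron's argument --- Fourier representation, cosine-to-step-to-sigmoid reduction on the bounded domain, and the Maurey--Jones--Barron averaging lemma --- with the same identification of $n$ as the number of hidden units. If anything, you supply considerably more detail than the paper does, including a correct diagnosis of where the weight-norm cap $\|v\|_2\le C_3$ enters the sigmoidal reduction.
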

\begin{proof}
Please refer to Theorem 3 in \cite{barron1993universal} for the proof. Note that the $\tau$ used in their paper is $C_1 C_3$ here. Furthermore, we omit the bias term $b$ as we can always add a dummy feature $1$ to the input $x$ to avoid using the bias term.
\end{proof}

We also need the following lemma:
\begin{lemma}
\label{lem:theta_appro}
For any $0 < \theta < \frac{\pi}{2}$, $m\le2(\lfloor\frac{\frac{\pi}{2}-\theta}{\theta}\rfloor+1)$, $(w_j')_{j=1}^m$, $\exists (w_j)_{j=1}^m$ such that
\begin{align}
&\forall j\neq k\in\{1,\cdots,m\}, \rho(w_j,w_k)\ge \theta\\
&\forall j\in\{1,\cdots,m\}, \|w_j\|_2 = \|w_j'\|_2\\
&\forall j \in \{1,\cdots,m\},\arccos (\frac{w_j \cdot w_j'}{\|w_j\|_2\|w_j'\|_2}) \le \theta'
\end{align}
where $\theta' = \min(3m\theta,{\pi})$.
\end{lemma}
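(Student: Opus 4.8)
The plan is to build the desired family $(w_j)_{j=1}^m$ one vector at a time, ensuring at the $j$-th step that $w_j$ is at $\rho$-distance at least $\theta$ from all of $w_1,\dots,w_{j-1}$ while only being rotated a controlled amount away from $w_j'$. All three target conditions are invariant under positive rescaling, so I will first construct unit directions and then put $w_j = \|w_j'\|_2 v_j$ at the end; I may therefore identify a vector with its direction throughout. Everything lives in a two-dimensional subspace, so I assume the ambient dimension is at least $2$. It is convenient to split on whether $\theta' = \pi$ or $\theta' = 3m\theta < \pi$.

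\emph{Case $3m\theta \ge \pi$, so $\theta' = \pi$.} Here the third conclusion is vacuous, so I may discard the $w_j'$ altogether and only need \emph{some} well-separated family with the prescribed norms. Fix a plane and let $v_j$ be the unit vector at planar angle $(j-1)\theta$, and set $w_j = \|w_j'\|_2 v_j$. For $i\ne j$ the ordinary angle between $v_i$ and $v_j$ is $|i-j|\theta\in[\theta,(m-1)\theta]$, so $\rho(w_i,w_j)=\min(|i-j|\theta,\ \pi-|i-j|\theta)\ge\min(\theta,\ \pi-(m-1)\theta)$. Since the hypothesis $m\le 2(\lfloor(\tfrac\pi2-\theta)/\theta\rfloor+1)=2\lfloor\pi/(2\theta)\rfloor$ gives $m\theta\le\pi$, we get $\pi-(m-1)\theta\ge\theta$, hence $\rho(w_i,w_j)\ge\theta$. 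The norm condition holds by construction.

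\emph{Case $3m\theta < \pi$, so $\theta' = 3m\theta$ and in particular $\theta<\pi/3$.} I argue by induction on $j$. Set $w_1 = w_1'$. Suppose $w_1,\dots,w_{j-1}$ have been chosen with pairwise $\rho\ge\theta$ and $\arccos\!\big(\tfrac{w_k\cdot w_k'}{\|w_k\|_2\|w_k'\|_2}\big)\le (k-1)\theta$ for $k<j$. Fix any plane $P$ containing $w_j'$ and work on the circle of directions of $P$ with antipodal points identified (total length $\pi$, with distance $\rho$). The key observation is that for $v\in P$ one has $\rho(v,w_k)\ge\rho(v,\mathrm{proj}_P w_k)$, because $v\cdot w_k = v\cdot\mathrm{proj}_P w_k$ while $\|\mathrm{proj}_P w_k\|_2\le\|w_k\|_2$; hence it suffices to keep $v$ at $\rho$-distance $\ge\theta$ from each $\mathrm{proj}_P w_k$. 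Each such requirement forbids an arc of length at most $2\theta$, so the forbidden set has total measure at most $2(j-1)\theta\le 2(m-1)\theta<2\pi/3<\pi$. An open set of measure at most $2(j-1)\theta$ cannot cover the closed arc of radius $(j-1)\theta$ centered at the direction of $w_j'$ (that arc has length $2(j-1)\theta$ and an open cover of a compact set has strictly larger measure), so there is an allowed direction $v$ within distance $(j-1)\theta$ of $w_j'$. Choosing the sign of $v$ so that $\arccos\!\big(\tfrac{(\pm v)\cdot w_j'}{\|w_j'\|_2}\big)\le (j-1)\theta\le 3m\theta$ and setting $w_j=\pm\|w_j'\|_2 v$ completes the step, since $\rho$ is insensitive to sign and scaling.

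The crux is the second case, specifically checking that the already-placed vectors never exhaust the directions near $w_j'$; this is precisely where the two hypotheses on $m$ enter ($3m\theta<\pi$ keeps the greedy step feasible, $m\le 2\lfloor\pi/(2\theta)\rfloor$ handles the complementary regime via the explicit construction). The angle-addition inequality (Lemma~\ref{lem:theta_sum_bound}) can replace the projection inequality if one instead builds $w_j$ through a sequence of small in-plane rotations, accumulating the displacement; in any event the displacement one obtains is no worse than $(m-1)\theta$, comfortably within the claimed bound $\theta'=\min(3m\theta,\pi)$.
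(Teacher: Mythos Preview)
Your argument is correct and takes a genuinely different route from the paper's.  The paper first manufactures a fixed $\theta$-separated grid on the sphere: in two dimensions it lays down the explicit directions $e_i$ at angles $\mathrm{sgn}(i)(\theta/2+(|i|-1)\theta)$, and in general dimension it stratifies the sphere into the level sets $\mathcal{E}_i=\{e:\;e\cdot(1,0,\dots,0)=\cos\theta_i\}$ and runs a separate maximal-packing construction inside each stratum, invoking compactness to show the process terminates.  Only after this infrastructure is in place does it assign each $w_j'$ to a distinct grid point by a pigeonhole cascade, yielding displacement at most $(m+\tfrac12)\theta$.  Your greedy scheme sidesteps all of this: at step $j$ you drop into a single plane through $w_j'$, replace each earlier $w_k$ by its projection using the clean inequality $\rho(v,w_k)\ge\rho(v,\mathrm{proj}_P w_k)$ for $v\in P$, and then argue on $\mathbb{RP}^1$ by measure.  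This works uniformly in every ambient dimension without any sphere covering, and it actually delivers a sharper displacement $(m-1)\theta$.  The paper's grid is more explicit and reusable, but for the lemma as stated your argument is shorter and loses nothing.

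Two small points worth tightening.  First, the sentence ``an open cover of a compact set has strictly larger measure'' is not a general fact; what makes it true here is that your closed arc is \emph{connected}, so if it were contained in the forbidden open set it would sit inside a single connected component of that set, necessarily an open arc of strictly greater length, contradicting the total-measure bound $2(j-1)\theta$.  Second, if some $\mathrm{proj}_P w_k=0$ then that $w_k$ imposes no constraint (since then $v\cdot w_k=0$ and $\rho(v,w_k)=\pi/2\ge\theta$), which only helps; you should say this rather than leave $\rho(v,0)$ undefined.
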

\begin{proof}
To simplify our notations, let $\phi(a,b) = \arccos(\frac{a\cdot b}{\|a\|_2\|b\|_2})$. We begin our proof by considering a 2-dimensional case: Let 
\begin{equation}
k = \lfloor\frac{\frac{\pi}{2}-\theta}{\theta}\rfloor
\end{equation}
Let index set $\mathcal{I} = \{-(k+1), -k,\cdots,-1,1,2,\cdots,k+1\}$. We define a set of vectors $(e_i)_{i\in\mathcal{I}}$: $e_i = (\sin \theta_i, \cos \theta_i)$, where $\theta_i \in (-\frac{\pi}{2},\frac{\pi}{2})$ is defined as follows:
\begin{equation}
\label{eq:def_theta}
\theta_i = \mathrm{sgn}(i)(\frac{\theta}{2} + (|i|-1)\theta)
\end{equation}
From the definition we can verify the following conclusions:
\begin{align}
&\forall i\neq j\in\mathcal{I}, \rho(e_i,e_j) \ge \theta\\
&-\frac{\pi}{2}+\frac{\theta}{2}\le\theta_{-(k+1)}<-\frac{\pi}{2}+\frac{3}{2}\theta\\
&\frac{\pi}{2}-\frac{3}{2}\theta<\theta_{k+1}\le\frac{\pi}{2}-\frac{\theta}{2}\\
\end{align}
And we can further verify that $\forall i\in\mathcal{I}$, there exists different $i_1, \cdots,i_{2k+1}\in\mathcal{I}\backslash i$ such that $\phi(e_{i}, e_{i_j})\le j\theta$.

For any $e = (\sin \beta, \cos \beta)$ with $\beta\in[-\frac{\pi}{2},\frac{\pi}{2}]$, we can find $i\in\mathcal{I}$ such that $\phi(e_i, e)\le\frac{3}{2}\theta$:
\begin{itemize}
\item if $\beta\ge\theta_{k+1}$, take $i=k+1$, we have $\phi(e_i,e)\le \frac{\pi}{2}-\theta_{k+1}<\frac{3}{2}\theta$.
\item if $\beta\le\theta_{-(k+1)}$, take $i=-(k+1)$, we also have $\phi(e_i,e)\le\frac{3}{2}\theta$
\item otherwise, take $i$ = $\textrm{sgn}(\beta)\lceil\frac{\beta-\frac{\theta}{2}}{\theta}\rceil$, we also have $\phi(e_i,e)\le \theta<\frac{3}{2}\theta$.
\end{itemize} 
Recall that for any $i$, there exists different $i_1, \cdots,i_{2k+1}\in\mathcal{I}\backslash i$ such that $\phi(e_{i}, e_{i_j})\le j\theta$, and use Lemma \ref{lem:theta_sum_bound}, we can draw the conclusion that for any $e = (\sin \beta, \cos \beta)$ with $\beta\in[-\frac{\pi}{2},\frac{\pi}{2}]$, there exists different $i_1, \cdots,i_{2k+2}$ such that $\phi(e_{i}, e_{i_j})\le \frac{3}{2}\theta+(j-1)\theta=(j+\frac{1}{2})\theta$. 

For any $(w_j')_{j=1}^m$, assume $w_j' = \|w_j'\|_2(\sin\beta_j,\cos\beta_j)$, and we assume $\beta_j\in[-\frac{\pi}{2},\frac{\pi}{2}]$. Using the above conclusion, for $w_1'$, we can find some ${r_1}$ such that $\phi(w_1', e_{r_1})\le\frac{3}{2}\theta$. For $w_2'$, we can find different $i_1,i_2$ such that $\phi(w_2',e_{i_1})\le \frac{3}{2}\theta<(\frac{3}{2}+1)\theta$ and $\phi(w_2',e_{i_2})\le(\frac{3}{2}+1)\theta$. So we can find $r_2\neq r_1$ such that $\phi(w_2', e_{r_2})\le (\frac{3}{2}+1)\theta$. Following this scheme, we can find $r_j\notin\{r_1,\cdots, r_{j-1}\}$ and $\phi(w_j', e_{r_j})\le (j+\frac{1}{2})\theta<3m\theta$ for $j=1,\cdots,m$, as we have assumed that $m\le2(k+1)$. Let $w_j = \|w_j'\|_2e_{r_j}$, then we have constructed $(w_j)_{j=1}^m$ such that
\begin{align}
&\forall j\in\{1,\cdots,m\},\phi(w_j',w_j)\le 3m\theta\\
&\forall j\in\{1,\cdots,m\}, \|w_j'\|_2 = \|w_j\|_2\\
&\forall j\neq k, \rho(w_j,w_k)\ge \theta
\end{align}
Note that we have assumed that $\forall j=1,\cdots,m$, $\beta_j\in[-\frac{\pi}{2},\frac{\pi}{2}]$. In order to show that the conclusion holds for general $w_j'$, we need to consider the case where $\beta_j\in[-\frac{3}{2}\pi,-\frac{\pi}{2}]$. For that case, we can let $\beta_j'=\beta_j+\pi$, then $\beta_j'\in[-\frac{\pi}{2},\frac{\pi}{2}]$. Let $w_j''=\|w_j'\|_2(\sin\beta_j',\cos\beta_j')$, we can find the $e_{r_j}$ such that $\phi(w_j'',e_{r_j})\le m\theta$ following the same procedure. Let $w_j=-\|w_j'\|_2e_{r_j}$, then $\phi(w_j',w_j) = \phi(w_j'',e_{r_j})\le 2m\theta$ and as $\rho(-e_{r_j},e_k)=\rho(e_{r_j},e_k)$, the $\rho(w_j,w_k)\ge \theta$ condition is still satisfied. Also note that $\phi(a,b)\le\pi$, the proof for 2-dimensional case is completed.

Now we consider a general $d$-dimensional case. Similar to the 2-dimensional one, we construct a set of vectors with unit $l_2$ norm such that the pairwise angles $\rho(w_j,w_k)\ge \theta$ for $j\neq k$. We do the construction in two phases:

In the first phase, we construct a sequence of unit vector sets indexed by $\mathcal{I} = \{-(k+1),\cdots,-1,1,\cdots,k+1\}$: 
\begin{equation}
\forall i \in \mathcal{I}, \mathcal{E}_i = \{e\in\mathbb{R}^d|\|e\|_2=1, e\cdot(1,0,\cdots,0)=\cos\theta_i\}
\end{equation}
where $\theta_i = \mathrm{sgn}(i)(\frac{\theta}{2} + (|i|-1)\theta)$ is defined the same as we did in Eq.(\ref{eq:def_theta}). It can be shown that $\forall i\neq j$, $\forall e_i \in \mathcal{E}_i, e_j \in \mathcal{E}_j$,
\begin{equation}
\rho(e_i,e_j)\ge \theta
\end{equation}
The proof is as follows.
First, we write $e_i$ as $e_i = (\cos\theta_i,0,\cdots,0) + r_i$, where $\|r_i\|_2 = |\sin \theta_i|$. Similarly, $e_j = (\cos\theta_j,0,\cdots,0) + r_j$, where $\|r_j\|_2 = |\sin \theta_j|$. Hence we have
\begin{equation}
e_i \cdot e_j = \cos\theta_i \cos\theta_j + r_i \cdot r_j
\end{equation}
Hence 
\begin{equation}
\begin{array}{lll}
&&\cos(\rho(e_i,e_j)) \\
&=&|e_i\cdot e_j|\\
&\le& \cos\theta_i\cos\theta_j+|\sin\theta_i \sin\theta_j|\\
&=& \max(\cos(\theta_i+\theta_j), \cos(\theta_i-\theta_j))
\end{array}
\end{equation}
We have shown in the 2-dimensional case that $\cos(\theta_i+\theta_j)\ge\cos\theta$ and $\cos(\theta_i-\theta_j)\ge\cos\theta$, hence $\rho(e_i,e_j)\ge \theta$. In other words, we have proved that for any two vectors from $\mathcal{E}_i$ and $\mathcal{E}_j$, their pairwise angle is lower bounded by $\theta$. Now we proceed to construct a set of vectors for each $\mathcal{E}_i$ such that the pairwise angles can also be lower bounded by $\theta$. The construction is as follows.

First, we claim that for any $\mathcal{E}_i$, if $\mathcal{W}\subset \mathcal{E}$ satisfies
\begin{equation}
\forall w_j\neq w_k \in \mathcal{W}, \phi(w_j,w_k)\ge \theta
\end{equation}
then $|W|$ is finite. In order to prove that, we first define $B(x,r) = \{y\in\mathbb{R}^n: \|y-x\|_2< r\}$. Then $\mathcal{E}_i\subset \cup_{e\in\mathcal{E}_i} B(e, \frac{1-\cos\frac{\theta}{2}}{1+\cos\frac{\theta}{2}})$. From the definition of $\mathcal{E}_i$, it is a compact set, so the open cover has a finite subcover. Therefore we have $\exists V\subset\mathcal{E}_i$ with $|V|$ being finite and
\begin{equation}
\mathcal{E}_i\subset \cup_{v\in V} B(v, \frac{1-\cos\frac{\theta}{2}}{1+\cos\frac{\theta}{2}})
\end{equation}
Furthermore, we can verify that $\forall v \in V, \forall e_1,e_2\in B(v, \frac{1-\cos\frac{\theta}{2}}{1+\cos\frac{\theta}{2}})$, $\phi(e_1,e_2)\le\theta$. So if $\mathcal{W}\subset\mathcal{E}_i$ satisfies $\forall w_j\neq w_k \in \mathcal{W}, \phi(w_j,w_k)\ge \theta$, then for each $v$, $|B(v, \frac{1-\cos\frac{\theta}{2}}{1+\cos\frac{\theta}{2}})\cap\mathcal{W}|=1$. As $\mathcal{W}\subset\mathcal{E}_i$, we have
\begin{equation}
\begin{array}{lll}
&&|W| \\
&=& |W\cap \mathcal{E}_i|\\
&=& |W \cap (\cup_{v\in V} B(v, \frac{1-\cos\frac{\theta}{2}}{1+\cos\frac{\theta}{2}}))|\\
&=& |\cup_{v\in V}W\cap B(v, \frac{1-\cos\frac{\theta}{2}}{1+\cos\frac{\theta}{2}})|\\
&\le& \sum_{v\in V} |W\cap B(v, \frac{1-\cos\frac{\theta}{2}}{1+\cos\frac{\theta}{2}})|\\
&\le& \sum_{v\in V} 1\\
&=& |V|
\end{array}
\end{equation}
Therefore, we have proved that $|W|$ is finite. Using that conclusion, we can construct a sequence of vectors $w_j\in\mathcal{E}_i (j=1,\cdots,l)$ in the following way:
\begin{enumerate}
\item Let $w_1\in\mathcal{E}_i$ be any vector in $\mathcal{E}_i$.
\item For $j=2,\cdots$, let $w_j\in\mathcal{E}_i$ be any vector satisfying
\begin{align}
\label{eq:construct_w}
&\forall k=1,\cdots,j-1, \phi(w_j,w_k)\ge\theta\\
&\exists k\in\{,\cdots,j-1\}, \phi(w_j,w_k)=\theta
\end{align}
until we cannot find such vectors any more.
\item As we have proved that $|W|$ is finite, the above process will end in finite steps. Assume that the last vector we found is indexed by $l$.
\end{enumerate}
We can verify that such constructed vectors satisfy
\begin{equation}
\forall j\neq k\in\{1,\cdots,l\}, \rho(w_j,w_k)\ge \theta
\end{equation}
Note that due to the construction, $\phi(w_j,w_k)\ge\theta$, as $\rho(w_j,w_k)=\min(\phi(w_j,w_k),\pi-\phi(w_j,w_k))$, we only need to show that $\pi-\phi(w_j,w_k)\ge\theta$. To show that, we use the definition of $\mathcal{E}_i$ to write $w_j$ as $w_j = (\cos\theta_i,0,\cdots,0) + r_j$, where $\|r_j\|_2 = |\sin \theta_i|$. Similarly, $w_k = (\cos\theta_i,0,\cdots,0) + r_k$, where $\|r_k\|_2 = |\sin \theta_i|$. Therefore $\cos(\phi(w_j,w_k))=w_j\cdot w_k \ge \cos^2\theta_i-\sin^2\theta_i=\cos (2\theta_i)\ge\cos(\pi-\theta)$, where the last inequality follows from the construction of $\theta_i$. So $\pi-\phi(w_j,w_k)\ge\theta$, the proof for $\rho(w_j,w_k)\ge\theta$ is completed.

Now we will show that $\forall e\in\mathcal{E}_i$, we can find $j\in\{1,\cdots,l\}$ such that $\phi(e,w_j)\le \theta$. We prove it by contradiction: assume that there exists $e$ such that $\min_{j\in\{1,\cdots,l\}} \phi(e,w_j)>\theta$, then as $\mathcal{E}_j$ is a connected set, there is a path $q:t\in[0,1]\to \mathcal{E}_j$ connecting $e$ to $w_1$, and when $t=0$, the path starts at $q(0)=e$; when $t=1$, the path ends at $q(1)=w_1$. We define functions $r_j(t) = \phi(q(t), w_j)$ for $t\in[0,1]$ and $j=1,\cdots,l$. It is straightforward to see that $r_j(t)$ is continuous, hence $\min_j(r_j(t))$ is also continuous. As $\min_j(r_j(0))>\theta$ and $\min_j(r_j(0))=0<\theta$, there exists $t^*\in(0,1)$ such that $\min_j(r_j(0))=\theta$. Then $q(t^*)$ satisfies Condition \ref{eq:construct_w}, which contradicts the construction in $W$ as the construction only ends when we cannot find such vectors. Hence we have proved that 
\begin{equation}
\forall e\in\mathcal{E}_i,\exists j\in\{1,\cdots,l\}, \phi(e,w_j)\le \theta
\end{equation}

Now we can proceed to prove the main lemma. For each $i\in\mathcal{I}$, we use Condition \ref{eq:construct_w} to construct a sequence of vectors $w_{ij}$. Then such constructed vectors $w_{ij}$ have pairwise angles greater than or equal to $\theta$. Then for any $e \in \mathcal{R}^d$ with $\|e\|_2=1$, we write $e$ in sphere coordinates as $e=(\cos r_1, \sin r_1\cos r_2 ,\cdots,\prod_{j=1}^d\sin r_j)$. Use the same method as we did for the 2-dimensional case, we can find $\theta_i$ such that $|\theta_i-r|\le\frac{3}{2}\theta$. Then $e'=(\cos\theta_i, \sin 
\theta_i \cos r_2,\cdots, \sin\theta_i \prod_{j=2}^d\sin r_j)\in \mathcal{E}_i$. It is easy to verify that $\phi(e, e')=|\theta_i-r|\leq\frac{3}{2}\theta$. As $e'\in\mathcal{E}_i$, there exists $w_{ij}$ as we constructed such that $\phi(e',w_{ij})\leq\theta$. So $\phi(e,w_{ij})\leq\phi(e,e')+\phi(e',w_{ij})\le \frac{5}{2}\theta<3\theta$. So we have proved that for any $e\in \mathbb{R}^d$ with $\|e\|_2=1$, we can find $w_{ij}$ such that $\phi(e,w_{ij}) < 3\theta$.

For any $w_{ij}$, assume $i+1\in \mathcal{I}$, we first project $w_{ij}$ onto $w^*\in\mathcal{E}_{i+1}$ with $\phi(w_{ij},w^*)\le\frac{3}{2}\theta$, then we find $w_{i+1,j'}\in\mathcal{E}_{i+1}$ such that $\phi(w_{i+1,j'},w^*)\le\theta$. So we have found $w_{i+1,j'}$ such that $\phi(w_{ij},w_{i+1,j'})\le\frac{5}{2}\theta < 3 \theta$. We can use similar scheme to prove that $\forall w_{ij}$, there exists different $w_{i_1,j_1}\cdots, w_{i_{2k+1},j_{2k+1}}$ such that $(i_r,j_r)\neq (i,j)$ and $\phi(w_{ij},w_{i_r,j_r})\le 3r\theta$. Following the same proof as the 2-dimensional case, we can prove that if $m\le 2k+1$, then we can find a set of vectors $(w_j)_{j=1}^m$ such that 
\begin{align}
&\forall j\in\{1,\cdots,m\},\phi(w_j',w_j)\le \min(3m\theta,\pi)\\
&\forall j\in\{1,\cdots,m\}, \|w_j'\|_2 = \|w_j\|_2\\
&\forall j\neq k, \rho(w_j,w_k)\ge \theta
\end{align}
The proof completes.
\end{proof}
\begin{lemma}
\label{lem:f_prime_f_bound}
For any $f' \in \mathcal{F}'$, $\exists f \in \mathcal{F}''$ such that
\begin{equation}
\|f' - f\|_L \le 4mCC_1C_3\sin(\frac{\theta'}{2})
\end{equation}
where $\theta' = \min(3m\theta,{\pi})$.
\end{lemma}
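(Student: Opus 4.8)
The plan is to start from a representation $f'(x)=\sum_{j=1}^m\alpha_j h((w_j')^{\mathsf T}x)$ of a given $f'\in\mathcal{F}'$, with $|\alpha_j|\le 2C$ and $\|w_j'\|_2\le C_3$, and to perturb the weight vectors into a new collection that meets the pairwise angular constraint while keeping the coefficients $\alpha_j$ unchanged. Concretely, I would apply Lemma~\ref{lem:theta_appro} to $(w_j')_{j=1}^m$; its hypotheses $0<\theta<\frac{\pi}{2}$ and $m\le 2(\lfloor\frac{\pi/2-\theta}{\theta}\rfloor+1)$ are exactly the standing assumptions of Theorem~\ref{thm:appro2}. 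This produces vectors $(w_j)_{j=1}^m$ with $\rho(w_j,w_k)\ge\theta$ for all $j\neq k$, with $\|w_j\|_2=\|w_j'\|_2$, and with $\arccos\!\big(\tfrac{w_j\cdot w_j'}{\|w_j\|_2\|w_j'\|_2}\big)\le\theta'$. Setting $f(x)=\sum_{j=1}^m\alpha_j h(w_j^{\mathsf T}x)$, the bounds $|\alpha_j|\le 2C$ and $\|w_j\|_2=\|w_j'\|_2\le C_3$ are inherited and the angular constraint now holds, so $f\in\mathcal{F}''$.

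It then remains to bound $\|f'-f\|_L$, which I would do pointwise. First, since $w_j$ and $w_j'$ have the same norm $r_j:=\|w_j'\|_2\le C_3$ and the angle $\gamma_j$ between them satisfies $\gamma_j\le\theta'\le\pi$, the law of cosines together with the half-angle identity gives $\|w_j-w_j'\|_2^2=2r_j^2(1-\cos\gamma_j)=4r_j^2\sin^2(\gamma_j/2)$; because $\sin$ is nondecreasing on $[0,\frac{\pi}{2}]$ and $\gamma_j/2\le\theta'/2\le\frac{\pi}{2}$, this yields $\|w_j-w_j'\|_2\le 2C_3\sin(\theta'/2)$. Next, using Lipschitz continuity of the activation (the sigmoid is $\tfrac14$-Lipschitz, so the constant $1$ certainly applies), Cauchy--Schwarz, and $\|x\|_2\le C_1$, one obtains $|h(w_j^{\mathsf T}x)-h((w_j')^{\mathsf T}x)|\le\|w_j-w_j'\|_2\,\|x\|_2\le 2C_1C_3\sin(\theta'/2)$. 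Summing over $j$ via the triangle inequality with $|\alpha_j|\le 2C$ gives $|f(x)-f'(x)|\le\sum_{j=1}^m|\alpha_j|\cdot 2C_1C_3\sin(\theta'/2)\le 4mCC_1C_3\sin(\theta'/2)$ for all $x$ with $\|x\|_2\le C_1$. Finally, since $P$ is a probability measure, $\|f'-f\|_L=\sqrt{\int(f'-f)^2\,dP}\le\sup_{\|x\|_2\le C_1}|f'(x)-f(x)|\le 4mCC_1C_3\sin(\theta'/2)$, which is the asserted bound.

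The argument is largely mechanical once Lemma~\ref{lem:theta_appro} is in hand; the only step that needs a little care is the passage from a bound on the \emph{angle} between $w_j$ and $w_j'$ to a bound on the \emph{Euclidean distance} $\|w_j-w_j'\|_2$, which relies on the two vectors having equal length and on the monotonicity of $\sin$ on $[0,\frac{\pi}{2}]$ --- it is precisely the truncation $\theta'=\min(3m\theta,\pi)$ in Lemma~\ref{lem:theta_appro} that keeps $\theta'/2$ within this range. The remaining obstacles are purely bookkeeping: verifying that the constructed $f$ really lies in $\mathcal{F}''$ (coefficient, norm, and angular constraints all hold) and tracking the constants so that they assemble into the factor $4mCC_1C_3$.
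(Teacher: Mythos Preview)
Your proposal is correct and follows essentially the same route as the paper: apply Lemma~\ref{lem:theta_appro} to obtain $(w_j)$ with the angular constraint and matching norms, keep the same coefficients $\alpha_j$, then bound $\|f-f'\|_L$ pointwise via the Lipschitz property of $h$, Cauchy--Schwarz, and the half-angle identity for $\|w_j-w_j'\|_2$. The only small point to add is that the paper explicitly verifies the $\ell_2$ constraint on $\bs{\alpha}$ via $\|\bs{\alpha}\|_2\le \sqrt{\|\bs{\alpha}\|_1\|\bs{\alpha}\|_\infty}\le 2\sqrt{m}C\le C_4$, using the standing assumption $C_4\ge 2\sqrt{m}C$; you should make this check explicit rather than folding it into ``bookkeeping.''
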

\begin{proof}
According to the definition of $\mathcal{F}'$, $\forall f' \in \mathcal{F}'$, there exists $(\alpha_j')_{j=1}^m$, $(w_j')_{j=1}^m$ such that
\begin{align}
&f' = \sum_{j=1}^m \alpha_j' h(w_j'^T x)\\
&\forall j\in\{1,\cdots,m\}, |\alpha_j'|\le 2 C\\
&\forall j\in\{1,\cdots,m\}, \|w_j'\|_2 \le C_4
\end{align}
According to Lemma \ref{lem:theta_appro}, there exists $(w_j)_{j=1}^m$ such that
\begin{align}
&\forall j\neq k\in\{1,\cdots,m\}, \rho(w_j,w_k)\ge \theta\\
&\forall j\in\{1,\cdots,m\}, \|w_j\|_2 = \|w_j'\|_2\\
&\forall j \in \{1,\cdots,m\},\arccos (\frac{w_j \cdot w_j'}{\|w_j\|_2\|w_j'\|_2}) \le \theta'
\end{align}
where $\theta' = \min(m\theta,\frac{\pi}{2})$. Let $f = \sum_{j=1}^m \alpha_j h(w_j'^T x)$, then $\|\alpha\|_2 \le \sqrt{\|\alpha\|_1\|\alpha\|_\infty} \le 2\sqrt{m}C\le C_4$. Hence $f\in\mathcal{F}$. Then all we need to do is to bound $\|f-f'\|_L$:
\begin{equation}
\label{eq:bound_f_f_prime}
\begin{array}{lll}
&&\|f-f'\|_L^2\\
&=& \int_{\|x\|_2\le C_1} (f(x)-f'(x))^2 dP(x)\\
&=& \int_{\|x\|_2\le C_1} (\sum_j \alpha_j h(w_j^T x)-\sum_j \alpha_j h(w_j'^T x))^2 dP(x)\\
&=& \int_{\|x\|_2\le C_1} (\sum_j \alpha_j (h(w_j^T x)- h(w_j'^T x)))^2 dP(x)\\
&\le& \int_{\|x\|_2\le C_1} (\sum_j |\alpha_j| |w_j^T x- w_j'^T x|)^2 dP(x)\\
&\le& C_1^2\int_{\|x\|_2\le C_1} (\sum_j |\alpha_j| \|w_j- w_j'\|_2)^2 dP(x)
\end{array}
\end{equation}
As $\arccos (\frac{w_j \cdot w_j'}{\|w_j\|_2\|w_j'\|_2}) \le \theta'$, we have $w_j\cdot w_j'\ge \|w_j\|_2^2 \cos \theta'$. Hence
\begin{equation}
\begin{array}{lll}
&&\|w_j- w_j'\|_2^2\\
&=& 2\|w_j\|_2^2 - 2 w_j \cdot w_j'\\
&\le& 2\|w_j\|_2^2 - 2\|w_j\|_2^2 \cos \theta'\\
&\le& 4C_3^2\sin^2(\frac{\theta'}{2})
\end{array}
\end{equation}
Substituting back to Eq.(\ref{eq:bound_f_f_prime}), we have
\begin{equation}
\begin{array}{lll}
&&\|f-f'\|_L^2\\
&\le& C_1^2\int_{\|x\|_2\le C_1} (\sum_j |\alpha_j| 2 C_3 \sin(\frac{\theta'}{2}) )^2 dP(x)\\
&\le& 16m^2C^2C_1^2C_3^2\sin^2(\frac{\theta'}{2})
\end{array}
\end{equation}
\end{proof}
With this lemma, we can proceed to prove Theorem \ref{thm:appro2}. For every $g\in \Gamma_C$ with $g(0)=0$, according to Lemma \ref{lem:appro2}, $\exists f' \in \mathcal{F'}$ such that
\begin{equation}
\|g-f'\|_L \le 2C(\frac{1}{\sqrt{n}} + \frac{1+ 2\ln C_1C_4}{C_1C_4})
\end{equation}
According to Lemma \ref{lem:f_prime_f_bound}, we can find $f\in \mathcal{F}$ such that 
\begin{equation}
\label{eq:appro2}
\|f-f'\|_L \le 4mCC_1C_3\sin(\frac{\theta'}{2})
\end{equation}
The proof is completed by noting
\begin{equation}
\|g-f\|_L \le \|g-f'\|_L+\|f'-f\|_L
\end{equation}

\begin{figure*}
\begin{center}
\includegraphics[width=0.5\columnwidth]{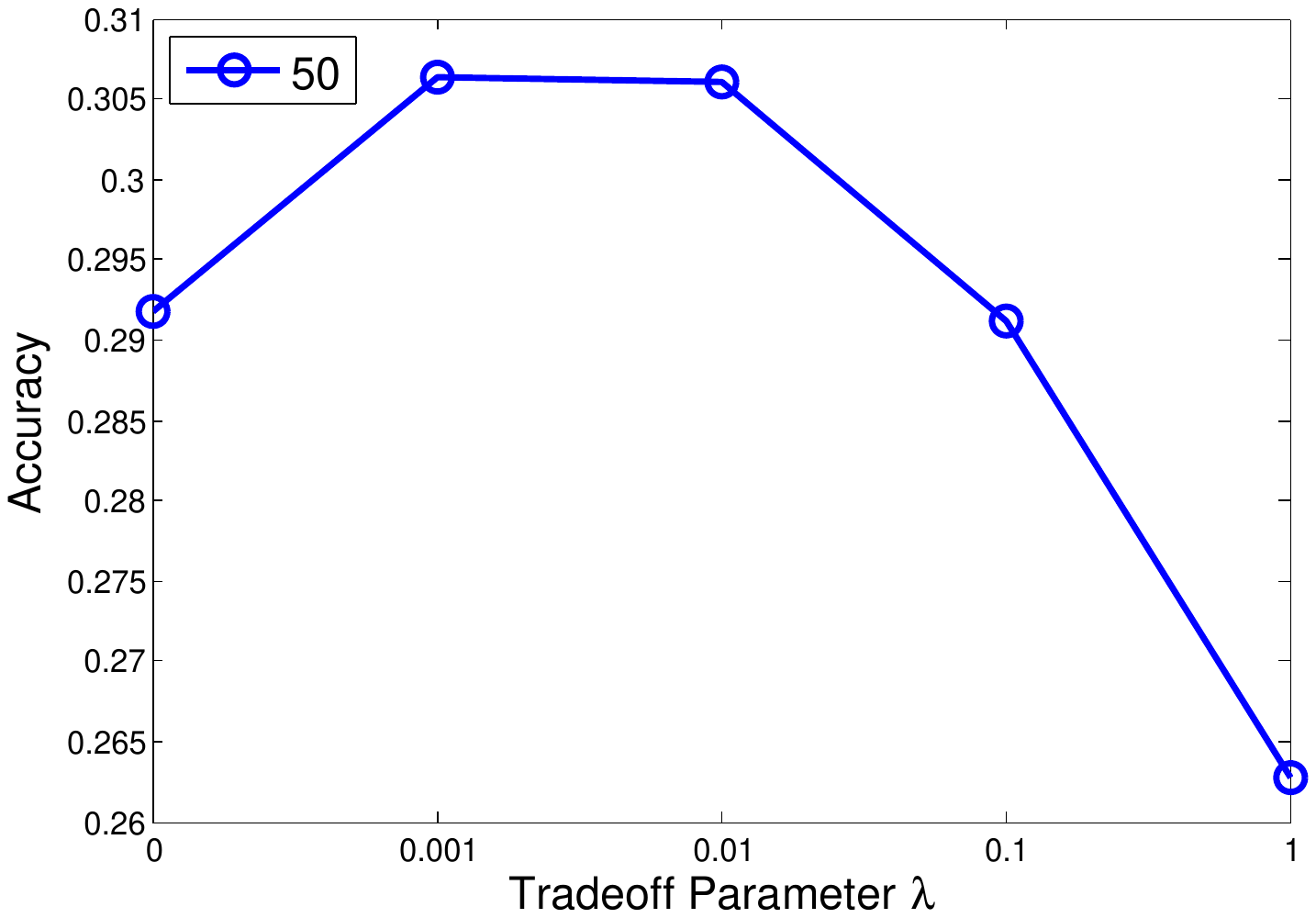}
\includegraphics[width=0.5\columnwidth]{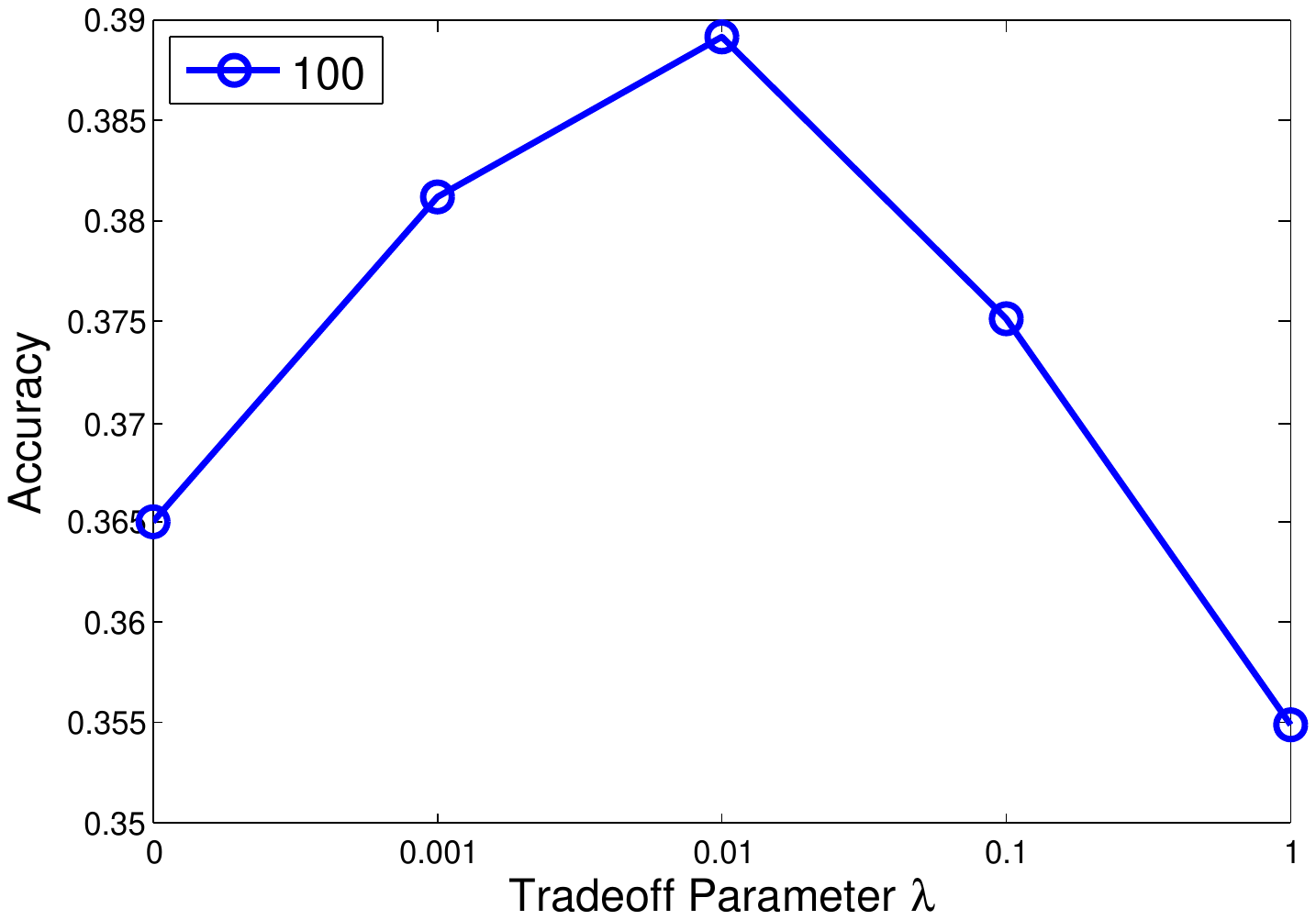}
\includegraphics[width=0.5\columnwidth]{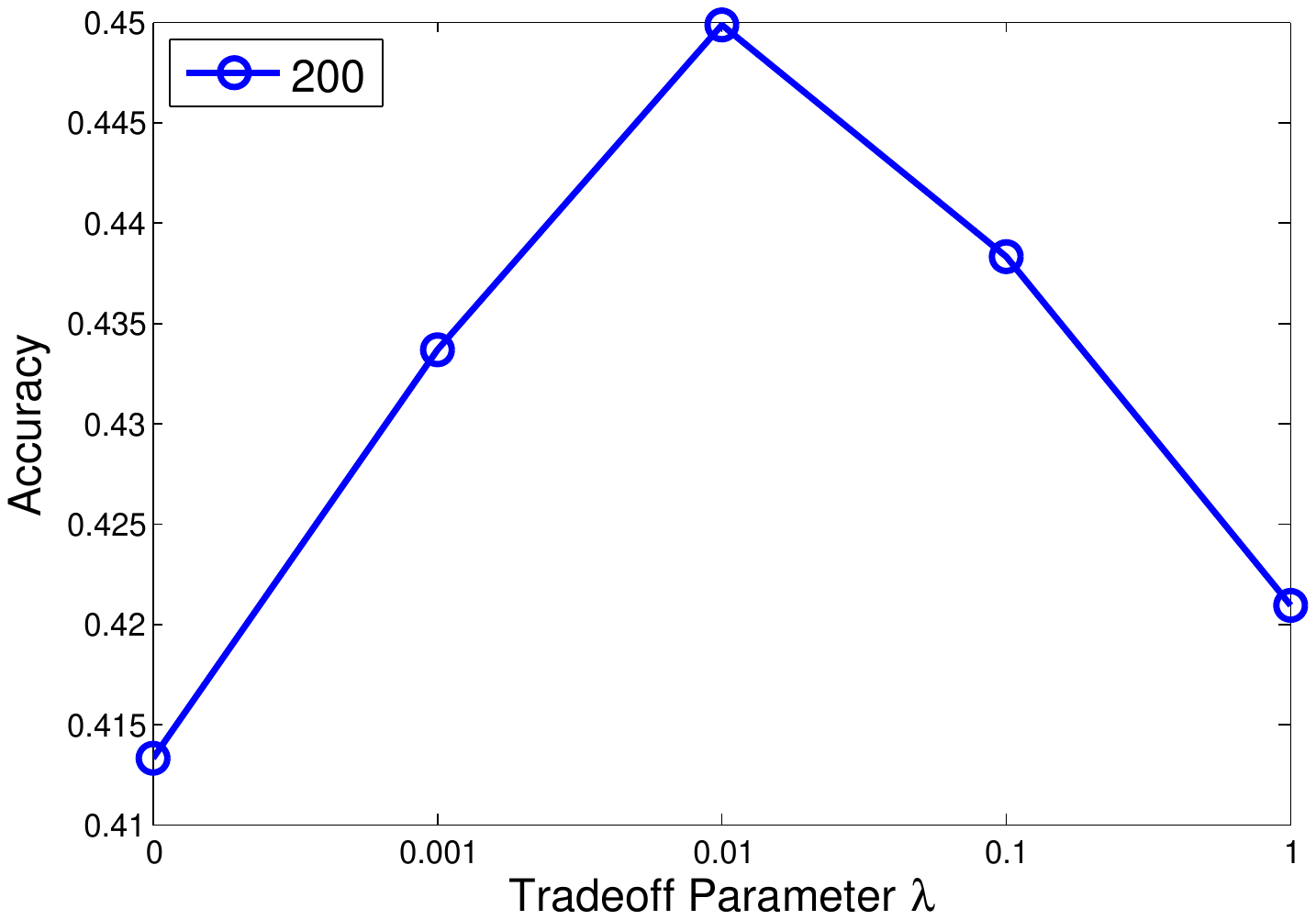}
\includegraphics[width=0.5\columnwidth]{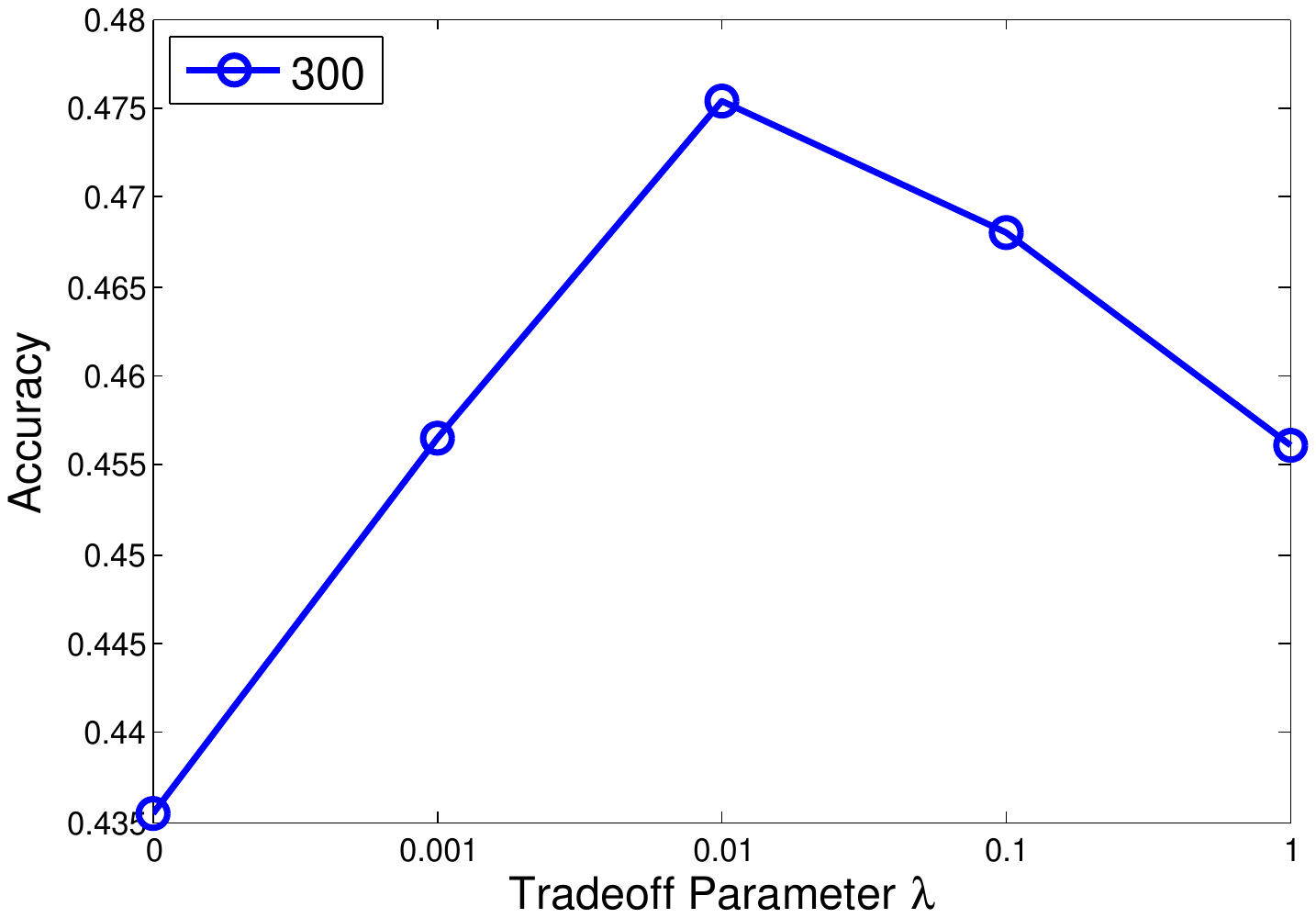}
\caption{Test accuracy versus $\lambda$ for neural networks with one hidden layer.}
\label{fig:acc_lambda_1l_timit}
\end{center}
\end{figure*}

\section{Experiments}
In this section, we present the experimental results on MAR-NN. Specifically, we are interested in how the performance of neural networks varies as the tradeoff parameter $\lambda$ in MAR-NN increases. A larger $\lambda$ would induce a stronger regularization, which generates a larger angle lower bound $\theta$. We apply MAR-NN for phoneme classification \cite{mohamed2011deep} on the TIMIT\footnote{\url{https://catalog.ldc.upenn.edu/LDC93S1}} speech dataset. 
The inputs are MFCC features extracted with context windows and the outputs are class labels generated by the HMM-GMM model through forced alignment \cite{mohamed2011deep}. The feature dimension is 360 and the number of classes is 2001. There are 1.1 million data instances in total. We use 70\% data for training and 30\% for testing. The activation function is sigmoid and loss function is cross-entropy. The networks are trained with stochastic gradient descent and the minibatch size is 100.

Figure \ref{fig:acc_lambda_1l_timit} shows the testing accuracy versus the tradeoff parameter $\lambda$ achieved by four neural networks with one hidden layer. The number of hidden units varies in $\{50,100,200,300\}$. 
As can be seen from these figures, under various network architectures, the best accuracy is achieved under a properly chosen $\lambda$. For example, for the neural network with 100 hidden units, the best accuracy is achieved  when $\lambda=0.01$. These empirical observations are aligned with our theoretical analysis that the best generalization performance is achieved under a proper diversity level. Adding this regularizer greatly improves the performance of neural networks, compared with unregularized NNs. For example, in a NN with 200 hidden units, the mutual angular regularizer improves the accuracy from $\sim$0.415 (without regularization) to 0.45.

\section{Related Works}
\subsection{Diversity-Promoting Regularization}
Diversity-promoting regularization approaches, which encourage the parameter vectors in machine learning models to be different from each other, have been widely studied and found many applications. Early works \cite{krogh1995neural,kuncheva2003measures,brown2005diversity,banfield2005ensemble,
tang2006analysis,partalas2008focused,yu2011diversity} explored how to select a diverse subset of base classifiers or regressors in ensemble learning, with the aim to improve generalization error and reduce computational complexity. Recently, \cite{Zou_priorsfor,xie2015diversifying,xie2015learning} studied the diversity regularization of latent variable models, with the goal to capture long-tail knowledge and reduce model complexity. In a multi-class classification problem, \cite{malkin2008ratio} proposed to use the determinant of the covariance matrix to encourage classifiers to be different from each other.
Our work focuses on the theoretical analysis of diversity regularized latent variable models, using neural network as an instance to study how the mutual angular regularizer affects the generalization error.
\subsection{Regularization of Neural Networks}
Among the vast amount of neural network research, a large body of works have been devoted to regularizing the parameter learning of NNs \cite{larsen1994generalization,hinton2012improving}, to restrict model complexity, prevent overfitting and achieve better generalization on unseen data. Widely studied and applied regularizers include L1 \cite{bach2014breaking}, L2 regularizers \cite{larsen1994generalization,bishop2006pattern}, early stopping \cite{bishop2006pattern}, dropout \cite{hinton2012improving} and DropConnect \cite{wan2013regularization}. In this paper, we study a new type of regularization approach of NN: diversity-promoting regularization, which bears new properties and functionalities complementary to the existing regularizers. 
%
\subsection{Generalization Performance of Neural Networks}
The generalization performance of neural networks, in particular the approximation error and estimation error, has been widely studied in the past several decades. For the approximation error, \cite{cybenko1989approximation} demonstrated that finite linear combinations of compositions
of a fixed, univariate function and a set of affine functionals can uniformly
approximate any continuous function.
\cite{hornik1991approximation} showed that neural networks with a single hidden layer, sufficiently many hidden units and arbitrary bounded and nonconstant activation function are universal approximators. \cite{leshno1993multilayer} proved that multilayer feedforward networks with a non-polynomial activation function can approximate any function.
 Various error rates have also been derived based on different assumptions of the target function. \cite{jones1992simple} showed that if the target function is in the hypothesis set formed by neural networks with one hidden layer of $m$ units, then the approximation error rate is $O(1/\sqrt{m})$.
 \cite{barron1993universal} showed that neural networks with one layer of $m$ hidden units and sigmoid activation function can achieve approximation error of order $O(1/\sqrt{m})$, where the target function is assumed to have a bound on the first moment of the magnitude distribution of the Fourier transform. \cite{makovoz1998uniform} proved that if the target function is of the form $f(\mb{x})=\int_{Q}c(\mb{w},b)h(\mb{w}^{\mathsf{T}}\mb{x}+b)\mathrm{d}\mu$, where $c(\cdot,\cdot)\in L_{\infty}(Q,\mu)$, then neural networks with one layer of $m$ hidden units can approximate it with an error rate of $n^{-1/2-1/(2d)}\sqrt{\log n}$, where $d$ is the dimension of input $\mb{x}$. 
 As for the estimation error, please refer to \cite{anthony1999neural} for an extensive review, which introduces various estimation error bounds based on VC-dimension, flat-shattering dimension, pseudo dimension and so on.

\section{Conclusions}
In this paper, we provide theoretical analysis regarding why the diversity-promoting regularizers can lead to better latent variable modeling. Using neural network as an instance, we analyze how the generalization performance of supervised latent variable models is affected by the mutual angular regularizer. Our analysis shows that increasing the diversity of hidden units leads to the decrease of estimation error bound and increase of approximation error bound. Overall, if the diversity level is set appropriately, a low generalization error can be achieved. The empirical experiments demonstrate that with mutual angular regularization, the performance of neural networks can be greatly improved and the empirical observations are consistent with the theoretical implications.

{\small
\bibliographystyle{unsrt}
\bibliography{drnn}
}

\end{document}